\newtheorem{definition}{Definition}
\newtheorem{theorem}{Theorem}
\newtheorem{corollary}{Corollary}[theorem] % The [theorem] makes it numbered relative to the theorem
\title{Intersectional Fairness via Mixed-Integer Optimization}
\author{
    % Author Name
    Anonymous Submission
    \affiliations
    Anonymous Affiliation
    \emails
    anonymous@anonymous.com
}
\author{
Jiří Němeček$^1$
\and
Mark Kozdoba$^2$\and
Illia Kryvoviaz$^1$\and
Tomáš Pevný$^1$\And
Jakub Mareček$^1$\\
\affiliations
$^1$Czech Technical University in Prague\\
$^2$Technion\\
% $^3$Third Affiliation\\
% $^4$Fourth Affiliation\\
\emails
% \{first, second\}@example.com,
% third@other.example.com,
contact@nemecekjiri.cz
}
\begin{document}

\maketitle

\begin{abstract}
    The deployment of Artificial Intelligence in high-risk domains, such as finance and healthcare, necessitates models that are both fair and transparent. While regulatory frameworks, including the EU's AI Act, mandate bias mitigation, they are deliberately vague about the definition of bias. In line with existing research, we argue that true fairness requires addressing bias at the intersections of protected groups. We propose a unified framework that leverages Mixed-Integer Optimization (MIO) to train intersectionally fair and intrinsically interpretable classifiers. We prove the equivalence of two measures of intersectional fairness (MSD and SPSF) in detecting the most unfair subgroup and empirically demonstrate that our MIO-based algorithm improves performance in finding bias. We train high-performing, interpretable classifiers that bound intersectional bias below an acceptable threshold, offering a robust solution for regulated industries and beyond.
\end{abstract}

\section{Introduction}
The integration of Artificial Intelligence (AI) systems into sensitive domains—ranging from lending and hiring to healthcare diagnostics—has shown potential for society, but simultaneously brought forward some critical issues.
% , e.g., 
% Amazon's hiring system \citep{dastin_insight_2018} or 
% the infamous COMPAS \citep{angwin_machine_2022}. 
One of these is the pervasive problem of algorithmic bias, in which systematic errors in data or model design lead to the unequal allocation of resources or opportunities to protected demographic groups.

Recognizing the gravity of these ethical and societal challenges, regulatory bodies and industry stakeholders have responded with various frameworks on bias mitigation. These efforts include specific legislation, such as the European Union's AI Act \citep{ai-act} and Codes of Practice, along with the proliferation of technical standards \citep{schwartz_towards_2022, 10851955}, codes of conduct\footnote{This includes major companies, such as \href{https://learn.microsoft.com/en-us/legal/ai-code-of-conduct}{Microsoft} or \href{https://ai.google/principles/}{Google}.}, and certifications\footnote{For example, \url{https://www.projectallyai.org/}} mandating the detection and mitigation of AI bias.

\begin{figure}
    \centering
    \includegraphics[width=\linewidth]{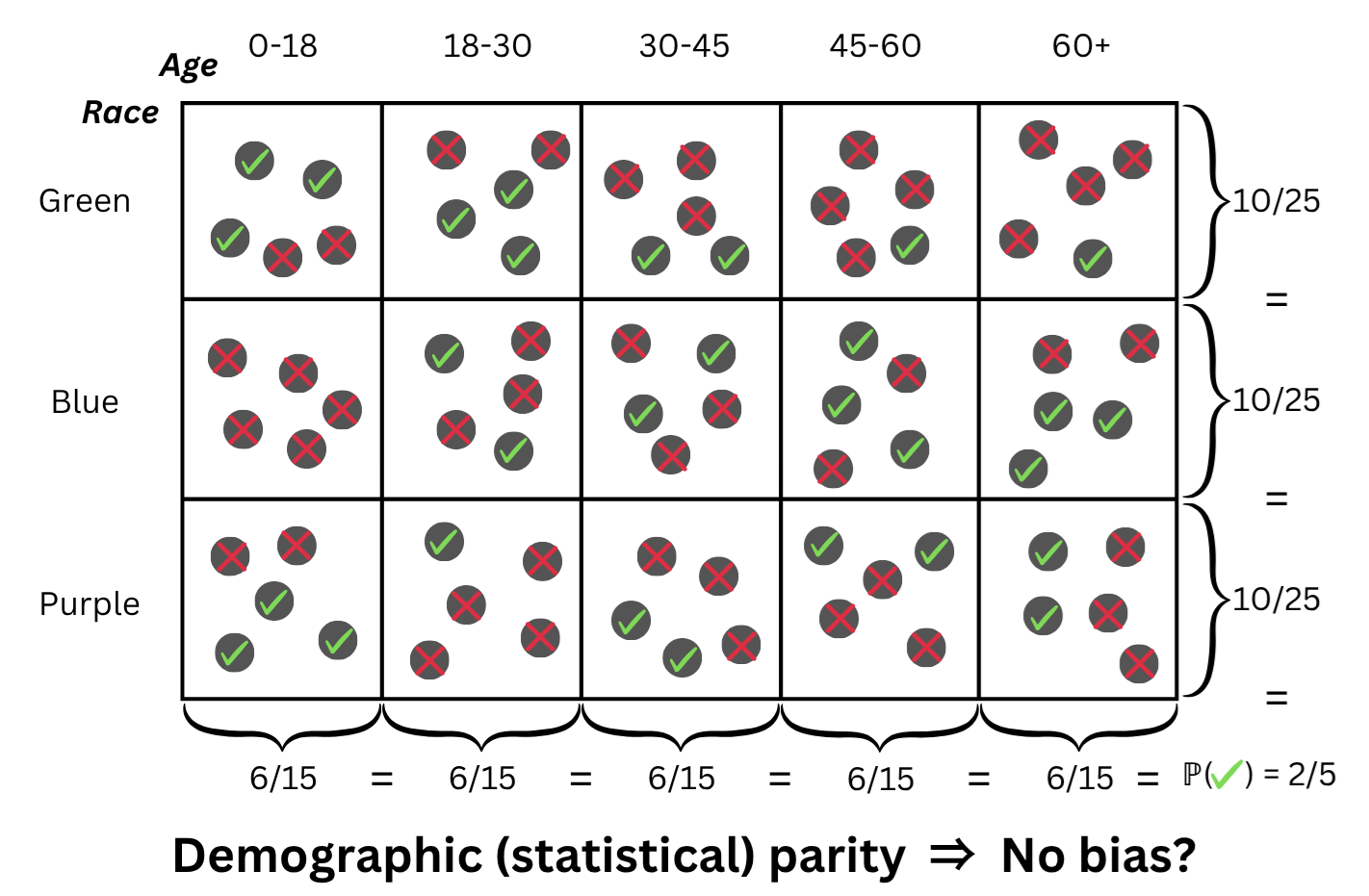}
    \caption{An illustrative example of when marginal bias is not enough. Statistical parity is achieved, suggesting marginal fairness, while blue people aged 0-18 are disproportionately more rejected. The graphic is a modification of Figure 1 in \citet{nemecek_bias_2025}.}
    \label{fig:motivation}
\end{figure}

However, following the requirements is complicated by the fact that no definition of fairness is universally recommended. Instead, compliance frameworks are typically vague, with statements like ``[data shall undergo] appropriate measures to detect, prevent and mitigate possible biases'' \citep[Article 10]{ai-act}. This leaves the developers to choose from a multitude of established mathematical fairness criteria (e.g., statistical parity \citep{dwork_fairness_2012}, equal opportunity \citep{hardt_equality_2016}). Despite this variety, most measures focus solely on single-attribute group fairness metrics (i.e., marginal fairness), which is fundamentally insufficient. As illustrated by Figure \ref{fig:motivation}, there is potential for hidden biases within the intersection of groups (subgroups) when marginal fairness is achieved.
\citet{buolamwini_gender_2018} showed disparate performance of commercial image classifiers across gender and skin tones. The disparity of the intersectional subgroups was more pronounced than simply adding up their marginal disparities. However, despite this seminal finding, recent bias audits mostly ignore the intersectional aspects of fairness \citep{gerchick_auditing_2025}.

Practical scenarios require considering intersectional fairness, that is, accounting for compounded discrimination against uniquely disadvantaged subgroups defined by combinations of sensitive attributes. This means using measures of intersectional bias, such as Statistical Parity Subgroup Fairness \citep{kearns_preventing_2018} or many others \citep{foulds_intersectional_2020,ghosh_characterizing_2021,gohar_survey_2023,nemecek_bias_2025}. 
Moreover, since the number of intersections depends exponentially on the number of protected attributes, studying all subgroups separately might become computationally infeasible \citep{nemecek_bias_2025}.
% , as shown by \citet{nemecek_bias_2025} on US census data \citep{ding_retiring_2021}. 
In such cases, rather than evaluating bias for each subgroup, one would like to \emph{detect} subgroups with a certain level of bias. 
Still, there are subgroups for which an insufficient amount of data prevents a reliable evaluation of bias due to the curse of dimensionality. 
% To address this data sparsity, possible thresholds of statistical significance were recently explored for SPSF \citep{ferrara_size-adaptive_2025}. 
While neglecting small-sample subgroups shows this technical approach's inadequacy for mitigating intersectional bias \citep{kong_are_2022}, there is still merit in evaluating all subgroups that can be evaluated.

% The machine learning community has developed a rich suite of techniques for training fair classifiers, capable of satisfying a variety of mathematical fairness criteria. 
Crucially, in many high-stakes application areas, the requirement for fairness is intrinsically paired with a demand for transparency. 
Fairness and understandability should thus be explored jointly \citep{alvarez_policy_2024}.
As argued by \citet{rudin_stop_2019}, domains requiring high accountability should move beyond unreliable explanations and prioritize intrinsic interpretability. 
% Interpretable models are models whose decision-making mechanisms are fully transparent by design. 
% under the umbrella of Trustworthy AI \citep{li_trustworthy_2023}, which is one of the topics of this work.

% In some cases, one would like an interpretable classifier, i.e., finance, healthcare? 
% This paper focuses 

\paragraph{Main Contributions} In this paper, we
\begin{itemize}
    \item prove the equivalence of SPSF \citep{kearns_preventing_2018} and MSD \citep{nemecek_bias_2025} in detecting the most biased subgroup, and compare them empirically;
    % \item extend SPSF and FPSF \citep{kearns_preventing_2018} to consider a more natural class of group intersections;
    \item improve on existing work in finding the most unfair groups according to the SPSF measure;
    \item propose and validate a training algorithm for certifiably fair and interpretable classifiers under measures of intersectional fairness; % based on Mixed-Integer Optimization; 
    % \item empirically evaluate training fair interpretable classifiers. 
\end{itemize}
% - extend the SPSF and MSD 
% - compare them in bias detection
% - compare them in training classifiers
% What would be the main thing?
% \begin{itemize}
%     \item Compare MSD and SPSF in training "interpretable" classifiers - linear functions  
%     % \item (Could I find a set where MDSS does work worse than MSD?)
%     \item Introduce the SPSF via MIO (using the conjunctions)
%     \item Nothing more than what I have...
%     \item Maybe other than testing the NN with last layer cut off - as just a training of an "encoder" for a linear layer.
% \end{itemize}

\paragraph{Notation} Throughout the paper, let $\mathcal{D}$ be a probability distribution over the input space $\mathcal{X}$. We have a dataset $D$ of pairs $\{(x_i,y_i)\}_{i=1}^n$ where $y_i \in \{0, 1\}$ is a binary target value for each $x_i \in \mathcal{X}$. Let $h: \mathcal{X} \to \{0, 1\}$ be the binary classifier function. Let $A$ represent a set of protected attributes and $\mathcal{S}$ represent a set of protected subgroups $S \in \mathcal{S}$, represented with indicator functions $S: \mathcal{X} \to \{0,1\}$. Specifically, $\mathcal{S}_A$ represents the set of all intersectional subgroups corresponding to conjunctions over attribute-value pairs from $A$, assuming all attributes $A$ are discrete. Let $\mathcal{S}_L$ be the set of subgroups representable using linear classifiers on protected attributes $A$. We shall refer to $\mathcal{S}_A$ and $\mathcal{S}_L$ as conjunction and linear subgroups, respectively. Finally, let $P(S(x))$ be the probability of some sample $x$ belonging to the subgroup $S \in \mathcal{S}$, given the distribution $\mathcal{D}$, unless specified otherwise.

\section{Background and Related Work}
\label{sec:fair_bg}
The field of fairness evaluation 
\citep{mehrabi_survey_2021} and fair training \citep{hort_bias_2024} is vast. Numerous works define classifier unfairness in different ways, which can generally be split into three categories \citep{mehrabi_survey_2021}: 
\begin{itemize}
    \item Marginal fairness, sometimes referred to as group fairness, suggests that all protected groups should be treated the same. By protected groups, we understand an assignment of a single protected feature, e.g., all males (gender = male). Marginal fairness compares classifier performance across all groups defined by a single protected attribute, or to the classifier's base performance. 
    \item Individual fairness is achieved when each individual sample is classified the same, regardless of the values of protected attributes. In other words, if two samples differ mainly in protected attributes and are otherwise similar, their classification should be the same. 
    \item Intersectional fairness extends marginal fairness, considering all intersections of marginal groups, e.g., white women older than 60. Like marginal fairness, it looks at (sub)groups of samples. Similar to individual fairness, it considers all protected attributes at once.
\end{itemize}
% Some about bias measures, fair training \cite{kearns_empirical_2019, }, esp. of interpretable models.
% We are interested in intersectional fairness here. 
The field of intersectional fairness can be further divided into multiple notions of fairness \citep{gohar_survey_2023}. We are interested in those notions, where the subgroup may be identified by a description over the set of protected attributes (unlike in Probabilistic Fairness \citep{molina_bounding_2022} or Metric-based fairness \citep{yona_probably_2018}) and which can evaluate bias in data
(unlike Calibration-based fairness \citep{hebert-johnson_multicalibration_2018}, targeting only fairness of the model, given the data).

This is useful when we wish to detect (and describe) groups that are unfairly represented in the data (or unfairly treated by the model).
Note that it is also insufficient to simply expand group intersections into new protected attributes, due to the exponential growth of these attributes and the low number of samples per intersection (data sparsity). 

\paragraph{Subgroup Fairness}
% While intersectional fairness in AI research has been criticized for overlooking the greater systems of oppression, we believe that there is a place for technical research in ``weak fairness'' \citep{kong_are_2022}, i.e., detection and debiasing.
In a seminal work, \citet{kearns_preventing_2018} have generalized the concept of Statistical Parity \citep{dwork_fairness_2012} from marginal fairness to intersectional fairness, naming it Statistical Parity Subgroup Fairness (SPSF). 
\begin{definition}[Statistical Parity Subgroup Fairness]
Given a probability distribution $\mathcal{D}$, input space $\mathcal{X}$, a binary classifier $h: \mathcal{X} \to \{0,1\}$, a set of subgroups $\mathcal{S}$, and a parameter $\gamma \in [0, 1]$, we say that $h$ satisfies $\gamma$-SPSF with respect to $\mathcal{S}$ if for every $S \in \mathcal{S}$ 
\begin{equation*}
    P(S(x)) \cdot \left| P(h(x) = 1) - P(h(x) = 1 \mid S(x)) \right| \le \gamma
\end{equation*}
We will refer to the left-hand side of this inequality as $\mathrm{SPSF}(h, S)$. 
\end{definition}
When performing the multiplication, one can see that the measure essentially evaluates an independence gap, i.e., the difference between the true joint probability and the joint probability under the assumption that the subgroup and the target variable are independent random variables.

In addition, \citet{kearns_preventing_2018} introduced a related notion, called False Positive Subgroup Fairness (FPSF), motivated by the Equality of Opportunity measure \citep{hardt_equality_2016}. 
\begin{definition}[False Positive Subgroup Fairness]
Given a probability distribution $\mathcal{D}$, an input space $\mathcal{X}$, a binary classifier $h: \mathcal{X} \to \{0,1\}$, a set of subgroups $\mathcal{S}$, and a parameter $\gamma \in [0, 1]$, we say that $h$ satisfies $\gamma$-FPSF with respect to $\mathcal{S}$ if for every $S \in \mathcal{S}$ 
% $\begin{aligned}[t]
% \end{aligned}$
% \begin{equation*}
% \begin{multlined}[b]
%     P(S(x), y = 0) \cdot \left| P(h(x) = 1 | y = 0) - & \\
%     P(h(x) = 1 \mid S(x), y = 0) \right| & \le \gamma
% \end{multlined}
% \end{equation*}
\begin{align*}
    P(S(x), y = 0) \cdot \bigl| &P(h(x) = 1 \mid y = 0) \\
    &- P(h(x) = 1 \mid S(x), y = 0) \bigr| \le \gamma
\end{align*}
We will refer to the left-hand side of this inequality as $\mathrm{FPSF}(h, S)$. 
\end{definition}

\citet{kearns_preventing_2018} also proposed a learning algorithm based on game theory, where a ``learner'' learns a classifier and an ``auditor'' identifies the subgroup with the greatest unfairness, thereby modifying the learner's task. They use linear models for both the classifier and the auditor's representation of a subgroup.

% Since then, numerous studies have utilized the SPSF measure. 
% For brevity, let us name just a few. 
% Crucially, the follow-up paper by \citep{kearns_empirical_2019} dove deeper into practical properties of the learning algorithm proposed for FPSF. % TODO maybe mention this if there is space
% Later, \citet{islam_can_2021} showed that even a simple hyperparameter search can help find models with similar performance but smaller intersectional bias and 
% \citet{kang_infofair_2022} introduced InfoFair, which links statistical parity with mutual information between the target and subgroups and minimizes that. 
% FAMS \citep{shui_learning_2022} uses group sufficiency, defined as the difference between the expected label given classifier output and the same measure, given a certain (sub)group.   
% Most recently, \citet{ferrara_size-adaptive_2025} examined the limits of statistical testing of intersectional fairness using SPSF.

% And although we focus on SPSF and MSD, there are other definitions of intersectional fairness \citep[e.g.,][]{foulds_intersectional_2020,ghosh_characterizing_2021}. Many of the alternatives, however, naively utilize the definitions made for marginal fairness and disregard the low sample numbers in more complex group intersections. In those cases, the measures can mistakenly report extreme (but statistically insignificant) values. Thus, the reported values can hardly be reliable. For more discussion, see the work by \citet{wang_towards_2022}.

\paragraph{Ratio-based Fairness Measures}
% is shite - maybe mix these 2 into a ratio - based fairness, include the rawls ideas???
Another area of fairness measures considers ratios, rather than differences. 
% \paragraph{Differential Fairness}
\citet{foulds_intersectional_2020} introduced Differential Fairness, a similar notion of fairness, which bounds a logarithm of the ratio $P(h(x) = 1 \mid S(x)) / P(h(x) = 0 \mid S(x))$ by epsilon, inspired by differential privacy measures. They mitigated the inaccuracy on small-sample subgroups by using a Dirichlet prior in estimation and reduced the number of subgroups required for evaluation to only the ``bottom-level'' ones (i.e., the most granular subgroups, where each protected attribute is assigned a value). At the same time, for sufficiently small subgroups (when there are many protected attributes), the measure becomes uninformative, as its value is determined by the parameter~$\alpha$.  
% TODO try to run it from the AIF360 or whatever to find out if the trouble with the prior appears in realistic scenarios

% Nonetheless, uB has the drawback that for a low amount of samples and when the number of protected groups is high, it is almost determined deterministically by the parameter α and cannot be trusted.

% The Probabilistic Fairness approach \citep{molina_bounding_2022} addresses this problem by bounding intersectional bias through marginal fairness measures rather than using priors. 
% % In doing so, they also reduce the number of protected attributes by combining the most correlated ones to maximize conditional independence. 
% Unfortunately, this notion therefore does not include an explicit representation of subgroups. 

Additionally, 
% based on the Rawlsian principle of distributive justice \citep{rawls_justice_2001}, 
Max-min fairness 
% aims to maximize the minimum utility. It 
evaluates the ratio between the subgroups with the highest and the lowest value of a given performance measure \citep{ghosh_characterizing_2021}. However, this approach requires evaluating all subgroups and performs poorly for low-sample ones.

\paragraph{Unfairness as a distance}
Lastly, there is a direction of viewing unfairness through the lens of distances between distributions. Among others, the positively- and negatively-classified samples can be considered samples from two distributions over the set of protected attributes. If the two distributions are the same, all protected subgroups are equally represented, and the distributional distance will be 0. 
There are works that consider the Wasserstein-1 distance \citep{jiang_wasserstein_2020}, utilize Wasserstein-2 and optimal transport to achieve fairness \citep{silvia_general_2020}, or employ Total Variation \citep{farokhi_optimal_2021}. 

However, all of these distances have exponential sample complexity at best, making them difficult to evaluate reliably. To address that, a sample-efficient distance, designed for intersectional fairness, was recently introduced, called Maximum Subgroup Discrepancy (MSD) \citep{nemecek_bias_2025}:
\begin{definition}[Maximum Subgroup Discrepancy]
\label{def:msd}
% Given an input space $\mathcal{X}$, a binary classifier $h: \mathcal{X} \to \{0,1\}$, a set of subgroups $\mathcal{S}$, and a parameter $\gamma \in [0, 1]$, we say that $h$ satisfies $\gamma$-SPSF with respect to $\mathcal{S}$ if for every $S \in \mathcal{S}$ 
% \begin{equation*}
%     MSD() \supP(S(x)) \cdot \left| P(h(x) = 1) - P(h(x) = 1 \mid S(x)) \right| \le \gamma
% \end{equation*}
% We will refer to the left-hand side of this inequality as $\mathrm{SPSF}(h, S)$. 
Given two probability distributions $\mathcal{D}_{1}$ and $\mathcal{D}_{2}$, defined over a set of discrete protected attributes $A$, let 
\begin{equation*}
    \mathrm{MSD}(\mathcal{D}_{1}, \mathcal{D}_{2}; A) \coloneqq \sup_{S \in \mathcal{S}_{A}} \left| P_{x\sim\mathcal{D}_{1}}(S(x)) - P_{x\sim\mathcal{D}_{2}}(S(x)) \right|, 
\end{equation*}
where $P_{x\sim\mathcal{D}_{1}}(S(x))$ represents the probability of $x$ belonging to a subgroup $S$, given the distribution $\mathcal{D}_{1}$ and the set of subgroups $\mathcal{S}_A$ contains all intersections of attribute-value pairs for all subsets of $A$.
\end{definition}
% The distance measure assumes categorical values of protected attributes. 
Its value lies between the Total Variation (or $\ell_1$ distance), which has exponential sample complexity (compared to the linear sample complexity of MSD), and the $\ell_\infty$, which considers only the bottom-level subgroups, leading to issues with data sparsity. Apart from low sample complexity, this measure is well interpretable, returning not only a value but also the subgroup intersection, which is most disproportionately represented between the two distributions \citep{nemecek_bias_2025}.
The authors propose finding the subgroup (and thus evaluating the MSD) via Mixed-Integer Optimization, formulating the problem as finding a conjunction that discriminates between samples from the two distributions.
% 
% is a distributional distance made to evaluate intersectional bias with an objective similar to that of SPSF. MSD is also well approximable, but assumes subgroups are defined as conjunctions rather than linear classifiers.
% Finally, note the recent work on statistical testing of intersectional fairness using the SPSF \citep{ferrara_size-adaptive_2025}.

% for brevity preskakujeme spoustu clanku, pokoru, nepredstirat ze mame vsechno nacteny, SPSF nejspis neni sota, spis jen "seminal paper"

\paragraph{Mixed-Integer Optimization}
Mixed-integer optimization (MIO, also referred to as mixed-integer \emph{programming} \citep{wolsey_integer_2021}) is a powerful framework for modeling and solving mathematical optimization problems, where some decision variables can take only integer values while others are continuously valued. Although MIO, as a general framework for solving NP-hard problems, has been criticised for poor scalability, the computational efficiency of cutting-edge MIO solvers has increased by approximately 22\% annually, \emph{excluding} hardware speedups \citep{koch_progress_2022}. We use the abbreviation MIO, though we consider only mixed-integer \emph{linear} formulations, i.e., those containing only linear constraints.
% Briefly explain MIO \citep{wolsey_integer_2021}

Due to this steady increase in solver performance, there has been rising interest in MIO from the Machine Learning community, especially from the perspective of interpretable models \citep{justin_responsible_2025}. Indeed, many interpretable models, partly due to their smaller size, have been considered suitable for training via MIO. This includes logical models like DNFs \citep{dash_boolean_2018}, rule lists 
\citep{rudin_learning_2018},
% \citep{angelino_learning_2018,rudin_learning_2018},
and even the more structured decision trees \citep{bertsimas_optimal_2017} or their ensembles \citep{misic_optimization_2020}.

% In addition, most neural networks can be formulated in MILO up to arbitrary precision, although the precision may be non-trivial to control. Specifically, the Gurobi Machine Learning toolkit \citep{gurobi_ml} can translate a trained neural network architecture in a number of machine-learning frameworks (scikit-learn, LightGBM, XGBoost, Keras, and PyTorch) to the corresponding MILO formulation.

\paragraph{Fair Interpretable models}
Additionally, there are many works training fair interpretable classifiers. These
include, for example, rule-based models (FairCORELS \citep{aivodji_faircorels_2021},
% ,aivodji_leveraging_2022}, 
DNFs \citep{lawless_interpretable_2023}), decision trees (FDT \citep{aghaei_learning_2019}, DPF \citep{van_der_linden_fair_2022}, FairOCT \citep{jo_learning_2023}) and forests \citep{raff_fair_2018}. Some of these use MIO to train the models; however, all of the above consider only \emph{marginal} fairness, disregarding the need to evaluate intersections, or they naively propose adding intersections as separate groups, which is insufficient due to data sparsity. 
% Except for gerryfair (introduced with the SPSF measure \citep{kearns_preventing_2018}), we are not aware of any method using the SPSF/FPSF or MSD to train fair and interpretable classifiers.

\section{Method}
Our framework has two parts. The first is the detection of the most unfair subgroup, which could be useful in auditing for a certain level of fairness, e.g., satisfaction of $\gamma$-SPSF fairness. This requires reliably identifying the subgroup with the highest fairness violation and comparing it to the threshold $\gamma$.

The second part of our framework considers training under intersectional fairness.
Reflecting Rawls's theory of distributive justice \citep{rawls_justice_2001}, we aim to improve fairness for the most disadvantaged groups, using the detection component of our framework to identify such subgroups.
% To do that, one has to first \emph{detect} the most disadvantaged subgroup.
% Alternatively, from an auditor's perspective, we wish to reliably check that our model is not biased, e.g., satisfies $\gamma$-SPSF fairness. We would thus like to identify the subgroup with the \emph{highest} SPSF and show that it is below $\gamma$. 
% This again requires finding (detecting) such a subgroup. 

To guarantee finding the \emph{most unfair} group, we will use Mixed-Integer Optimization, a framework that can find global optima. From the methods explored in Section \ref{sec:fair_bg}, we choose SPSF (and FPSF) \citep{kearns_preventing_2018} and MSD \citep{nemecek_bias_2025}. The reasons for this choice are that those measures (1) can evaluate intersectional bias in data, (2) provide a subgroup description, (3) take the data sparsity into account, to be reliably evaluated (this includes having reasonable sample complexity). Finally, (4) they are representable with linear constraints in MIO, unlike ratio-based measures, for example. Additionally, auditing the SPSF measure has been shown not to be polynomially learnable \citep[Thm. 3.3]{kearns_preventing_2018}, thus the use of a generally exponential global optimizer, such as MIO, is justified. 
% In order to find it you need to have a reliable score - one that is automatically below the threshold when it cannot be statistically significant...

\subsection{Bias detection}
Firstly, we note that FPSF is almost equivalent to evaluating SPSF on the negative samples. Their relation follows from their definitions:
\begin{equation}
    \mathrm{FPSF}(h, S) = P(y = 0) \cdot \mathrm{SPSF}_{\mathcal{D}_{\mid y = 0}}(h, S), \label{eq:fpsf_spsf}
\end{equation}
where $\mathrm{SPSF}_{\mathcal{D}_{\mid y = 0}}$ is used to denote the SPSF measure on the conditional distribution of negatively labeled samples. Thus, we will focus only on the relationship between SPSF and MSD.
To show that maximizing SPSF and MSD measures lead to detecting the same set of subgroups, we first define Subgroup Discrepancy as the quantity for which MSD attains its supremum.
% MSD Defined strictly for the intersections based on the notion of conjunctions

\begin{definition}[Subgroup Discrepancy]
Given two probability distributions $\mathcal{D}_{1}$ and $\mathcal{D}_{2}$ over the input space $\mathcal{X}$ and a subgroup $S$, let us define Subgroup Discrepancy (SD) as
\begin{equation*}
    \mathrm{SD}(S; \mathcal{D}_{1}, \mathcal{D}_{2}) \coloneqq \left| P_{x\sim\mathcal{D}_{1}}(S(x)) - P_{x\sim\mathcal{D}_{2}}(S(x)) \right|, 
\end{equation*}
where $P_{x\sim\mathcal{D}_{1}}(S(x))$ represents the probability of $x$ belonging to a subgroup $S$, given the distribution $\mathcal{D}_{1}$.
\end{definition}
This means that taking a supremum of SD over the set $\mathcal{S}_A$ is the MSD, exactly 
\begin{equation}
    \sup_{S\in \mathcal{S}_A} \mathrm{SD}(\mathcal{D}_{1}, \mathcal{D}_{2}, S) = \mathrm{MSD}(\mathcal{D}_{1}, \mathcal{D}_{2}; A).
\end{equation} 

We can use the SD to prove the following theorem:

% To see whether $\gamma$-SPSF is achieved, it is sufficient to find the subgroup for which the SPSF measure is the highest. To show that, in detecting the most unfair subgroup, maximizing SPSF is equivalent to maximizing SD (finding MSD), we first relate the two measures.

\begin{theorem}[SPSF and SD relation]
\label{thm:proport}
Given a probability distribution $\mathcal{D}$ over an input space $\mathcal{X}$, a binary classifier $h: \mathcal{X} \to \{0, 1\}$, and a subgroup $S$, the Statistical Parity Subgroup Fairness (SPSF) metric \citep{kearns_preventing_2018} and the Subgroup Discrepancy (SD) satisfy the following relation:
\[
\mathrm{SD}(S; \mathcal{D}_{\mid h(x)=1}, \mathcal{D}_{\mid h(x)=0}) = \frac{\mathrm{SPSF}(h, S)}{P(h(x) = 1) \cdot P(h(x) = 0)}, 
\]
where $\mathcal{D}_{\mid h(x)=1}, \mathcal{D}_{\mid h(x)=0}$ are the conditional distributions of positively- and negatively-classified samples, respectively.
\end{theorem}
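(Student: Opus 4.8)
The plan is to reduce both sides of the claimed identity to the same elementary expression in the three scalars $p \coloneqq P(h(x)=1)$, $s \coloneqq P(S(x))$, and $q \coloneqq P(S(x), h(x)=1)$, working in the regime $0 < P(h(x)=1) < 1$ in which the conditional distributions $\mathcal{D}_{\mid h(x)=1}$ and $\mathcal{D}_{\mid h(x)=0}$, as well as the denominator on the right-hand side, are well defined.

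\textbf{Step 1: rewrite the left-hand side.} By the definition of conditioning, $P_{x\sim\mathcal{D}_{\mid h(x)=1}}(S(x)) = q/p$, and, using $P(S(x), h(x)=0) = P(S(x)) - P(S(x), h(x)=1) = s - q$ together with $P(h(x)=0) = 1 - p$, also $P_{x\sim\mathcal{D}_{\mid h(x)=0}}(S(x)) = (s-q)/(1-p)$. Hence $\mathrm{SD}(S; \mathcal{D}_{\mid h(x)=1}, \mathcal{D}_{\mid h(x)=0}) = \left| \tfrac{q}{p} - \tfrac{s-q}{1-p} \right|$; placing the two fractions over the common denominator $p(1-p)$ and cancelling the $\pm qp$ terms in the numerator gives $\mathrm{SD} = |q - sp| \,/\, \bigl(p(1-p)\bigr)$.

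\textbf{Step 2: rewrite the right-hand side.} Expanding $P(h(x)=1 \mid S(x)) = q/s$ gives $P(h(x)=1) - P(h(x)=1\mid S(x)) = p - q/s = (ps - q)/s$, so multiplying by $P(S(x)) = s$ yields $\mathrm{SPSF}(h,S) = |ps - q|$ — precisely the ``independence gap'' $\bigl|P(S(x))P(h(x)=1) - P(S(x),h(x)=1)\bigr|$ noted immediately after the SPSF definition. Dividing this by $P(h(x)=1)P(h(x)=0) = p(1-p)$ recovers the expression obtained in Step 1, which is exactly the asserted equality.

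\textbf{Obstacles and caveats.} There is no substantive obstacle: the whole argument is a few lines of algebra, and the ``key step'' is merely bookkeeping. The only points needing a word of care are the degenerate cases. When $p \in \{0,1\}$ the conditional distributions and the right-hand quotient are undefined, so the theorem is (implicitly) restricted to classifiers with $0 < P(h(x)=1) < 1$; and when $s = 0$ both sides equal $0$ under the standard convention that the product $P(S(x)) \cdot (\cdot)$ vanishes, so such $S$ may be excluded without loss. Finally, running the identical computation with $\mathcal{D}$ replaced by the conditional distribution $\mathcal{D}_{\mid y=0}$ and combining with Eq.~\eqref{eq:fpsf_spsf} yields the analogous statement for FPSF, so that case requires no separate proof.
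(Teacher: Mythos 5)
Your proof is correct and follows essentially the same route as the paper's: both reduce the identity to the observation that $\mathrm{SPSF}(h,S)$ equals the independence gap $\left|P(S(x))P(h(x)=1) - P(S(x),h(x)=1)\right|$ and that $\mathrm{SD}$ is this same quantity divided by $P(h(x)=1)P(h(x)=0)$, the only cosmetic difference being that you simplify both sides to a common expression while the paper transforms SD into SPSF in a single chain. Your explicit treatment of the degenerate cases $P(h(x)=1)\in\{0,1\}$ and $P(S(x))=0$ is a small but welcome addition that the paper leaves implicit.
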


% \begin{theorem}[SPSF and MSD equivalence]
% Given a probability distribution $\mathcal{D}$,
% an input space $\mathcal{X}$, a binary classifier $h: \mathcal{X} \to \{0,1\}$, and a set of intersectional subgroups $\mathcal{S}_\mathcal{P}$, the set of subgroups that maximize the Statistical Parity Subgroup Fairness (SPSF) metric \citep{kearns_preventing_2018} over those subgroups is the same achieving Maximum Subgroup Discrepancy (MSD) \citep{nemecek_bias_2025} between the distributions of positively- and negatively-classified samples. 
% \begin{equation}
%     SPSF(h, \mathcal{S}) = P(h(x) = 1) \cdot P(h(x) = 0) \cdot MSD(h, \mathcal{S})
% \end{equation}
% where $MSD$ denotes the Mean Squared Deviation across the subgroups in $\mathcal{S}$.
% \end{theorem}
\begin{proof}
We start with the definition of Subgroup Discrepancy and show that it equals a multiple of SPSF. For space reasons, let us simplify the notation by denoting $P(S(x))$ as $P(S)$, $P(h(x) = 1)$ as $P(h)$, and $P(h(x) = 0)$ as $P(\Bar{h})$. Then
\begin{align*}
    \mathrm{SD}&(S; \mathcal{D}_{\mid h(x)=1}, \mathcal{D}_{\mid h(x)=0}) \\
    &= \left| P(S \mid h) - P(S \mid \bar{h}) \right|, \\
    \intertext{then utilizing the Bayes' rule}
    &= \left|\frac{P(S, h)}{P(h)} - \frac{P(S, \Bar{h})}{P(\Bar{h})}\right| \\
    % \intertext{and factorizing out $\frac{1}{P(h) \cdot P(\Bar{h})}$ since it does not depend on the subgroup $S$}
    &= \frac{1}{P(h) \cdot P(\Bar{h})} \cdot \left|P(h)P(S, \Bar{h}) - P(\Bar{h})P(S, h)\right|; \\
    \intertext{denoting the $1 / (P(h) \cdot P(\Bar{h}))$ as $p_h$ for visual clarity and using the fact that $P(h) = 1 - P(\bar{h})$ we get}
    &= p_h \cdot \left|P(h)P(S, \Bar{h}) - \left(1 - P(h)\right)P(S, h)\right|\\
    &= p_h \cdot \left|P(h)\left(P(S, \Bar{h}) + P(S, h)\right) - P(h, S)\right|, \\
    \intertext{and since the sum of the two joint probabilities is equal to the marginal probability}
    &= p_h \cdot \left|P(S)P(h) - P(h, S)\right| \\
    &= p_h \cdot P(S)\left|P(h) - P(h \mid S)\right|; \\
    \intertext{finally, expanding the $p_h$ again, and identifying the SPSF we get}
    &= \frac{\mathrm{SPSF}(h, S)}{P(h) \cdot P(\Bar{h})},
\end{align*}
which corresponds exactly to the sought relation. 
% Since the base rates are independent of the subgroup, the same subgroup (or a set thereof) maximizes SPSF and MSD between the distributions of positively-classified ($h(x) = 1$) and negatively-classified samples ($h(x) = 0$), denoted by $\mathcal{D}_{\mid h}$ and $\mathcal{D}_{\mid \Bar{h}}$ respectively.
\end{proof}

The equivalence of MSD and maximizing SPSF in detecting the most unfair subgroup follows from this proportional relation between SPSF and SD, due to the independence of the denominator on the subgroup $S$.

\begin{corollary}[Equivalence of SPSF and MSD]
It follows from the relation in Theorem \ref{thm:proport} that the set of subgroups $S \in \mathcal{S}_A$ that maximize the SPSF metric is identical to the set of subgroups achieving the Maximum Subgroup Discrepancy between the distributions of positively- and negatively-classified samples:
\[
\arg\sup_{S \in \mathcal{S}_A} \mathrm{SPSF}(h, S) = \arg\sup_{S \in \mathcal{S}_A} \mathrm{SD}(S; \mathcal{D}_{\mid h(x)=1}, \mathcal{D}_{\mid h(x)=0}),
\]
where the right side of the equation is the set of subgroups achieving $\mathrm{MSD}(\mathcal{D}_{\mid h(x)=1}, \mathcal{D}_{\mid h(x)=0}; A)$.
This holds for any set of subgroups $\mathcal{S}$, but we use the $\mathcal{S}_A$ in line with the definition of MSD (Definition \ref{def:msd}). 
\end{corollary}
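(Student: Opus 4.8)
The plan is to read off the corollary directly from Theorem~\ref{thm:proport}, whose content is exactly the proportionality needed. That theorem gives, for every subgroup $S$,
\[
\mathrm{SD}(S; \mathcal{D}_{\mid h(x)=1}, \mathcal{D}_{\mid h(x)=0}) \;=\; \frac{1}{P(h(x)=1)\,P(h(x)=0)}\cdot \mathrm{SPSF}(h,S),
\]
so SD and SPSF differ only by the factor $c \coloneqq 1/\bigl(P(h(x)=1)\,P(h(x)=0)\bigr)$, which is a strictly positive real number that does \emph{not} depend on $S$ (valid precisely when $0 < P(h(x)=1) < 1$, i.e., exactly when the conditional distributions $\mathcal{D}_{\mid h(x)=1}$ and $\mathcal{D}_{\mid h(x)=0}$ are themselves well-defined).

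First I would invoke the elementary fact that rescaling an objective by a positive constant leaves its set of maximizers unchanged: for any real-valued $f$ on a set $\mathcal{S}$ and any $c>0$, one has $\arg\sup_{S\in\mathcal{S}} f(S) = \arg\sup_{S\in\mathcal{S}} c\,f(S)$ (both as sets; if the supremum is not attained, both sets are empty and the identity still holds). Applying this with $f = \mathrm{SPSF}(h,\cdot)$ and the constant $c$ above yields
\[
\arg\sup_{S\in\mathcal{S}} \mathrm{SPSF}(h,S) \;=\; \arg\sup_{S\in\mathcal{S}} \mathrm{SD}(S; \mathcal{D}_{\mid h(x)=1}, \mathcal{D}_{\mid h(x)=0}),
\]
and the argument uses nothing about $\mathcal{S}$, so it holds for any subgroup collection. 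Specializing to $\mathcal{S} = \mathcal{S}_A$ and using the identity $\sup_{S\in\mathcal{S}_A}\mathrm{SD}(S; \cdot,\cdot) = \mathrm{MSD}(\cdot,\cdot;A)$ recorded just before Theorem~\ref{thm:proport}, the right-hand set is precisely the set of subgroups attaining $\mathrm{MSD}(\mathcal{D}_{\mid h(x)=1}, \mathcal{D}_{\mid h(x)=0}; A)$, which is the claimed statement.

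Two small points I would tidy up for completeness. Because $A$ is a finite set of discrete attributes, $\mathcal{S}_A$ is finite, hence both suprema are attained and both $\arg\sup$ sets are nonempty, so the corollary is a genuine equality of maximizer sets rather than a vacuous one. The only genuinely delicate situation is the degenerate case $P(h(x)=1)\in\{0,1\}$, where $c$ and the conditional distributions are undefined; there $\mathrm{SPSF}(h,S)=0$ for every $S$ (the classifier is a.s.\ constant), so every subgroup trivially maximizes SPSF and the statement should be read as vacuous or excluded by hypothesis. I do not expect any real obstacle here: the corollary is an immediate consequence of the proportionality in Theorem~\ref{thm:proport}, and the only care required is flagging that degenerate boundary case cleanly.
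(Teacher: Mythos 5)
Your proof is correct and matches the paper's own (one-line) justification: the corollary follows because $\mathrm{SD}$ and $\mathrm{SPSF}$ differ only by the factor $1/\bigl(P(h(x)=1)\,P(h(x)=0)\bigr)$, which is positive and independent of $S$, so the maximizer sets coincide. Your extra remarks on attainment over the finite class $\mathcal{S}_A$ and on the degenerate case $P(h(x)=1)\in\{0,1\}$ are sensible refinements the paper leaves implicit.
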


Please note that the two measures are equivalent only in terms of detecting the subgroup; their values can differ significantly, especially for classifiers with imbalanced classes. This can influence the training under a given measure. We can, however, say that $h$ satisfies $\gamma$-SPSF fairness with respect to $\mathcal{S}_A$ by evaluating MSD on the conditional distributions of positively- and negatively-classified samples, and finding $$\mathrm{MSD}(\mathcal{D}_{\mid h(x)=1}, \mathcal{D}_{\mid h(x)=0}; A) \le \frac{\gamma}{P(h(x) = 1) \cdot P(h(x) = 0)}.$$
A similar relation holds for $\gamma$-FPSF fairness, by expressing FPSF in terms of SPSF (see Eq. \ref{eq:fpsf_spsf}).
% In this sense, we can define $\gamma$-(M)SD fairness: 
% \begin{definition}[$\gamma$-(M)SD]
%     Given a distribution $\mathcal{D}$ over an input space $\mathcal{X}$, a binary classifier $h: \mathcal{X} \to \{0,1\}$, a set subgroups $\mathcal{S}$, and a parameter $\gamma \in [0,1]$, we say that the classifier $h$ satisfies $\gamma$-SD fairness with respect to $\mathcal{S}$ if for every $S \in \mathcal{S}$
%     \begin{equation*}
%     \mathrm{SD}(S; \mathcal{D}_{\mid h(x)=1}, \mathcal{D}_{\mid h(x)=0}) \le \gamma
%     \end{equation*}
%     where $\mathcal{D}_{\mid h(x)=1}$ and $\mathcal{D}_{\mid h(x)=0}$ are conditional distributions of positively- and negatively-classified samples. If $\mathcal{S} = \mathcal{S}_A$, we can also say that the classifier $h$ satisfies $\gamma$-MSD fairness since if the supremum is below $\gamma$, all elements are as well. 
% \end{definition}

\paragraph{Choosing the subgroup definition}
In terms of the set of possible subgroups, \citet{nemecek_bias_2025} used the $\mathcal{S}_A$ class, represented by conjunctions over the (discrete) protected attribute-value pairs, while \citet{kearns_preventing_2018} proposed working with a more general set of subgroups represented by a linear classifier ($\mathcal{S}_L$).
% The original implementation of SPSF/FPSF uses linear classifiers to express whether a sample belongs to a subgroup. This broad definition of intersectional subgroup controls for bias even on sets that would otherwise not be considered subgroups, but rather some union of multiple unrelated subgroups. 

Despite the need for discretization, we primarily use conjunctions because, in detection, we wish to identify the single most disadvantaged subgroup rather than a union of multiple (partial) subgroups, as would be possible with linear models. This makes the subgroups clearer, which can be useful for designing specialized treatment for a given subgroup in bias mitigation, for example \citep{sharma_facilitating_2024}. Furthermore, because $\mathcal{S}_A \subseteq \mathcal{S}_L$, it should be less constraining and allow for classifiers with higher performance while guaranteeing the same level of fairness for every subgroup. 
Discretization, on the other hand, is not necessarily a problem in bias evaluation, since most protected attributes are discrete, or can be discretized meaningfully (e.g., $\mathrm{age} > 40$ in employment use cases in the US \citep{adea_age_1967}). 

\subsection{Training Framework} 
% We now describe
% make it more about the use of MIO?
% Assuming we can find the most unfair subgroup, 
% With the introduction of SPSF and FPSF, \citet{kearns_preventing_2018} proposed an algorithm to train a linear threshold classifier under the SPSF/FPSF criterion based on a game between a learner and auditor
Generally, we aim to optimize the following problem
\begin{subequations}
\label{eq:train_form}
\begin{align}
    \min \; & \sum_{i=1}^n \llbracket \hat{y}_i \ne y_i \rrbracket/N_{y_i} \label{eq:acc}\\ 
    \mathrm{s. t.} \; & h(x_i) = \hat{y}_i & \forall i \in \{1, \ldots, n\}  \label{eq:classif}\\
    & \mathrm{Unfairness}_S(x, y, \hat{y}) \le \gamma & \forall S \in \mathcal{S} \label{eq:fair}
\end{align}
\end{subequations}
where $N_{y_i}$ is the number of samples with class $y_i$, the $\hat{y_i} \in \{0,1\}$ is a variable representing the estimated class of sample $x_i$, \eqref{eq:acc} is a balanced 0-1 loss we are minimizing, \eqref{eq:classif} represents the constraints of the classifier we are training, and $\mathrm{Unfairness}_S$ is some measure of intersectional bias over the subgroup $S$, in our case an empirical estimate of FPSF$(h, S)$, SPSF$(h, S)$, or SD$(S; \mathcal{D}_{\mid h(x)=1}, \mathcal{D}_{\mid h(x)=0})$ on dataset $D$. Each measure can be represented using a single linear constraint, although the Subgroup Discrepancy requires adding $\mathrm{O}(n)$ variables and constraints to the base model. Details are in Appendix~\ref{app:MIO}. Note that this is the case only for training; in detection, the measures differ in the objective function (and number of used samples - constraints, in the case of FPSF).
% In our case, we will typically use FPSF as a less restrictive measure. 

Expressing the fairness constraint \eqref{eq:fair} for all subgroups would mean including an exponential number of constraints. However, only a few of the most violating subgroups can be enough in practice, since the rest will score below $\gamma$. In MIO, we shall, therefore, represent the set of constraints \eqref{eq:fair} with lazy constraints \citep{pearce_towards_2019}, also known as cutting planes. This means that when a MIO solver finds a feasible solution, we will find the most unfair subgroup given the classifier, and if the found subgroup violates the threshold $\gamma$, we add the subgroup as a constraint to the model (we ``cut off'' the incumbent solution) and re-solve it. 
This way, we only need to add a few constraints (often fewer than 10; see Table \ref{tab:data}) rather than an exponential number. 
To speed up the search, we can constrain the objective value to be above the $\gamma$ threshold, or even remove the maximization objective to find any feasible solution above the $\gamma$ threshold, which was the approach taken in experiments.

% Expanding in this line of work, 
% we propose a framework for training intersectionally fair interpretable models, including linear classifiers and rule sets using Mixed-Integer Optimization (MIO).
% we propose combining the many works on training interpretable models using Mixed-Integer Optimization (MIO) with works on intersectional bias detection (representable with linear constraints) into a framework for training fair interpretable models. 

Regarding the types of classifiers $h(\cdot)$ that can be optimized, many ML models can be formulated and trained using MIO (some to provable optimality) with linear constraints. These include many classifiers considered interpretable: classification trees \citep{bertsimas_optimal_2017}, rule sets \citep{dash_boolean_2018}, decision lists \citep{rudin_learning_2018}, ensembles \citep{misic_optimization_2020}, and more. 
Indeed, interpretable models usually have fewer parameters than black-box models like neural networks, making them well-suited for MIO, which has limited scalability. Additionally, MIO enables finding globally optimal solutions, i.e., models with the highest performance, which is even more crucial, given the inherent interpretability-performance tradeoff.

% In this framework, we leave the choice of an interpretable model to the user. The only condition posed on the model is that it must be possible to formulate its training within MIO, preferably with linear constraints. Indeed, our framework is as straightforward as combining a formulation of training a model and lazily adding constraints on the subgroup bias. To obtain the most biased subgroup, one can use the MSD formulation or modify it with an alternative objective for SPSF/FPSF, as described earlier. The constraint added to the training formulation is then the objective function, but with the subgroup assignments $s(\cdot)$ fixed.

This framework allows the user to choose any MIO-representable ML model and train it under intersectional fairness constraints. We use existing formulations to train the models and detect the most unfair subgroup, detailed in Appendix \ref{app:MIO}.
At the same time, just the bias detection part of this framework can be used to identify the most unfair subgroup, whether for auditing or for some targeted bias mitigation strategies.

% Many interpretable models are trainable within MILO with zero error - trees, rules, DNFs, and linear classifiers.

% In this framework, we train a classifier. When a solution is found, we check for fairness and add constraint(s) on subgroups one by one, as needed. 

\section{Experiments}
We empirically validate the relationship between the bias measures, explore the use of MIO for bias detection, and showcase the training framework. We perform experiments on five datasets from the folktables library \citep{ding_retiring_2021}, representing realistic tasks over US Census data. We consider the 1-year horizon survey of 2018  for the state of California. Each configuration is run on five random subsamples of the data. See dataset summary in Table \ref{tab:data}.

\begin{table*}[t]
% \resizebox{\linewidth}{!}{
\centering
\begin{tabular}{lrrrrrr}
    \toprule
         Dataset name & \# samples & \# protected & \# subgroups ($\mathcal{S}_A$) & Conjunction cuts & Linear cuts & Overall cuts \\
         \midrule
         ACS Income  & 195,665 & 4 & 1,229 &  3.80 $\pm$ 1.03 &  11.33 $\pm$ 10.13 &     8.32 $\pm$ 8.63 \\
         ACS Public Coverage  & 138,554 & 12 & 56,008,799 &  9.90 $\pm$ 1.79 &  13.93 $\pm$ 11.14 &    12.32 $\pm$ 8.81 \\
         ACS Mobility  & 80,329 & 11 & 11,567,699 &   0.80 $\pm$ 0.79 &   21.33 $\pm$ 17.00 & 13.12 $\pm$ 16.56 \\
         ACS Employment  & 378,817 & 11 & 18,381,599 &  22.50 $\pm$ 8.59 &  15.87 $\pm$ 13.55 &  18.52 $\pm$ 12.07 \\
         ACS Travel Time  & 172,508 & 6 & 47,339 &   5.00 $\pm$ 1.56 &   10.67 $\pm$ 9.80 &    8.40 $\pm$ 8.06 \\
\bottomrule
\end{tabular}
% }
\caption{
The number of samples, protected attributes ($|A|$), and intersectional subgroups representable by conjunctions of attribute-value pairs. And the mean number of subgroups (i.e., cuts of a candidate solution) considered during training of the fair models. Values after $\pm$ represent standard deviation.
}
\label{tab:data}
\end{table*}

We utilize the open-source Pyomo \citep{bynum_pyomo_2021} as an MIO modeling library. 
% Since we are extending the MIO framework used by \citet{nemecek_bias_2025}, we also utilize Pyomo \citep{bynum_pyomo_2021} as an MIO modeling library. 
We use the Gurobi solver \citep{gurobi} to solve the formulations, i.e., train the models. 
The algorithms were run on an internal cluster with sufficient RAM and an Intel Xeon Scalable Gold 6146 or AMD EPYC 7543 processor (based on node availability) with 16 cores available.

\subsection{Detecting the most unfair subgroup}
\label{sec:eval_results}
    % training
    % (ideally take one group and use it to show that this is not really a true subgroup...)
\begin{figure*}
    \centering
    \includegraphics[width=\linewidth]{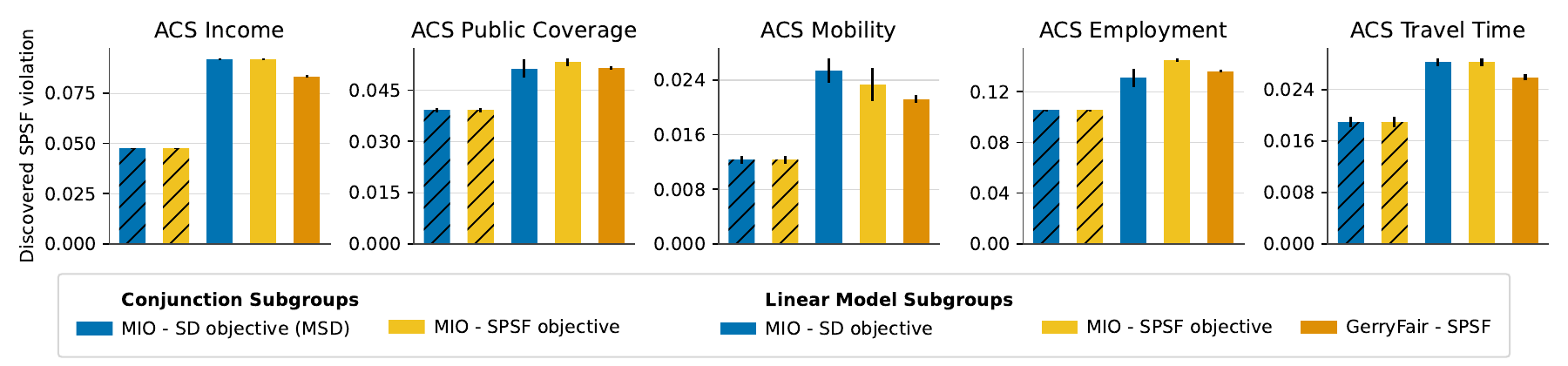}
    \caption{
    Mean fairness violation (and standard deviations). In detection, finding the subgroup with more unfairness is better. The bars with slanted lines represent the optimal solutions found over conjunction subgroups ($\mathcal{S}_A$), which are a subset of linear subgroups ($\mathcal{S}_L$), explaining the difference. The blue bar with slanted lines corresponds to MSD, and the orange bar corresponds to GerryFair; the rest is our work.
    % Comparison of most disproportionately represented subgroups based on MSD and SPSF metrics. The conjunctions (rules) with the highest MSD computed through MIO in red, the subgroups with the highest SPSF violations in green (subgroups defined as rules and found by MIO), and blue (subgroups defined as linear classifiers found using GerryFair \citep{kearns_preventing_2018}). Interestingly, MSD and SPSF objectives both lead to finding very similar subgroups, suggesting a strong correlation between the two definitions of bias. MSD, however, shows that Linear classifiers allow for a wider definition of subgroups, finding groups with higher bias. MIO generally increases the runtime, though not as much for the MSD objective. The black error lines represent the standard deviation over the five random subsamples of the data.
    }
    \label{fig:eval}
\end{figure*}

% To evaluate bias using this framework, one only needs to know the protected attribute values and the prediction for each sample (and the true labels $y$ in case of FPSF). Therefore, the following evaluation can be done for any (not only MIO trainable) models. 

To compare the SPSF and MSD in a practical setting, we use the true label $y$ as the vector of predictions, evaluating bias in the data itself (i.e., the bias of a perfect classifier $h(x_i) = y_i$). We sample 50,000 points and find subgroups maximizing SPSF and SD violation, using the MIO approach (amounting to MSD in the case of maximizing SD on conjunction-based subgroups $\mathcal{S}_A$) and the GerryFair \citep{kearns_preventing_2018}, which uses linear threshold functions as subgroups and maximizes SPSF. To better compare to GerryFair, we also use MIO to find the most unfair subgroup represented by a linear classifier ($\mathcal{S}_L$). The linear groups are warm-started with a solution from either GerryFair (if it was numerically stable for MIO) or a logistic regression model. The solver was run with a time limit of 10,000 seconds.

The results, see Figure \ref{fig:eval}, empirically validate our theoretical result. The conjunction-based subgroups, which are proven optimal, are the same for SD and SPSF. With linear subgroups MIO approach consistently finds subgroups with notably higher unfairness than GerryFair. The use of SD and SPSF as objectives yields different results here because the solver does not find the optimal solution within the given time. Note that MIO can dominate GerryFair even when run for just 5 minutes; see Appendix \ref{app:time} for details. These results suggest that GerryFair might claim $\gamma$-SPSF fairness in cases when the opposite can be shown to be true.

% In addition, MSD seems to be easier to optimize, given that the time spent finding the subgroup is roughly half that of SPSF (or less). These results suggest that SPSF might be replaceable by MSD in searching for the most biased subgroup. This would be particularly useful for to the scalability potential, since MSD can be posed as simply minimizing a 0-1 error over a class-balanced dataset. 

% SPSF from gerryfair X SPSF via MIO 
%     Just evaluation first, see correlations between the measures? 

% Training the last NN layer for fairness.
%     same as above but possibly better accuracy?

\subsection{Training framework}
\label{sec:train_results}

\begin{figure*}[t]
    \centering
    \includegraphics[width=\linewidth]{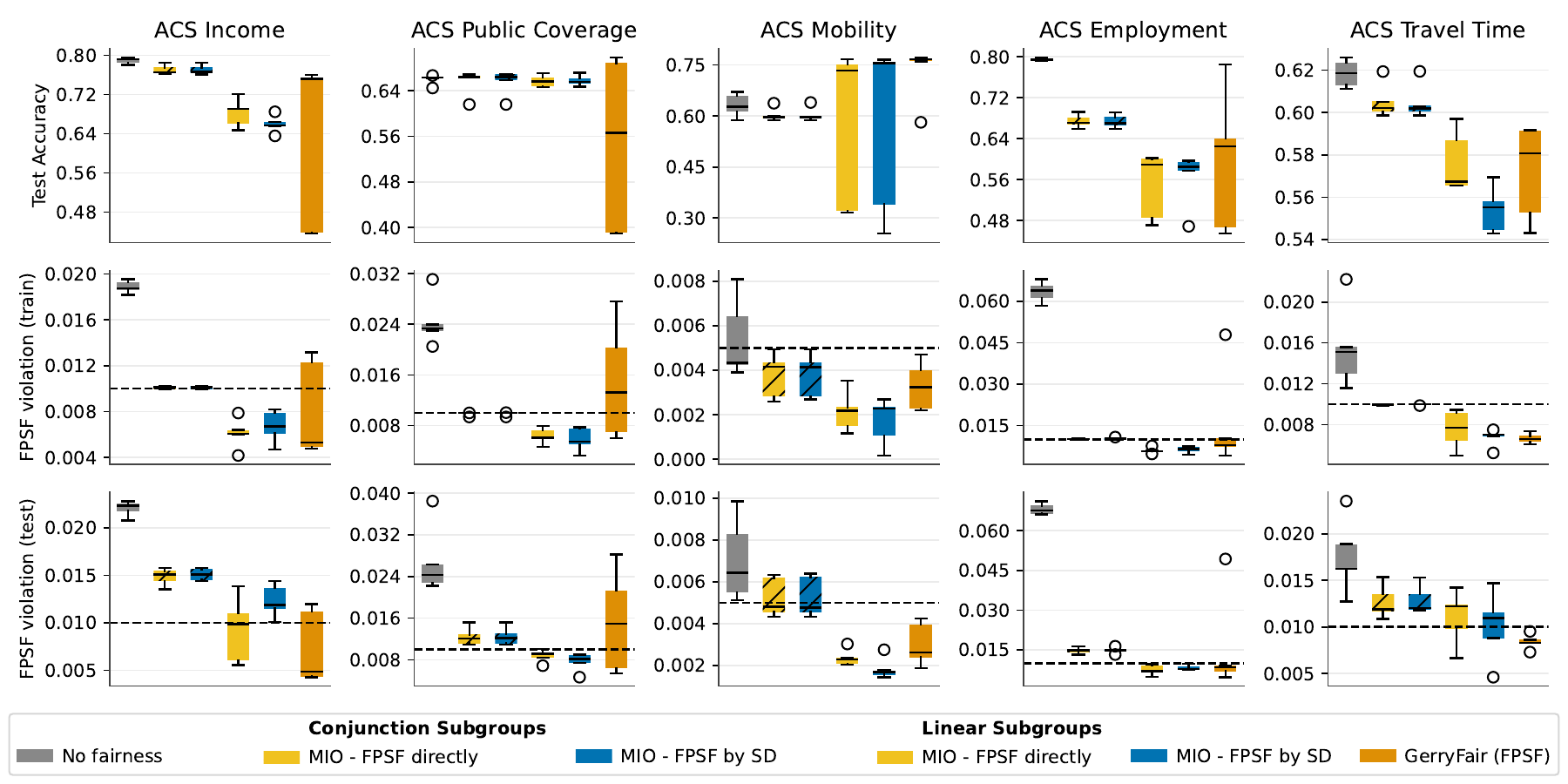}
    \caption{
    Comparison of trained classifiers. We compare our MIO-based approach to a classifier without fairness constraints and GerryFair. The dashed line represents the fairness threshold $\gamma$ used during training. Blue boxes indicate using SD as a proxy for FPSF to identify the violating subgroup; in training, all use the FPSF measure. We evaluate FPSF violation using MIO on conjunction subgroups ($\mathcal{S}_A$).
    % Comparison of trained classifiers. An (unfair) linear classifier trained by MIO in grey, similar, but with additional constraint on FPSF $\le 0.01$ in green, and an alternative implementation with the same threshold, from the GerryFair library \citep{kearns_preventing_2018}, using linear subgroups, in blue. The dashed line represents the aimed fairness threshold of $0.01$. Optimizing a classifier within MIO often leads to better performance with a guaranteed satisfaction of the FPSF threshold on the training set. Since GerryFair considers the more general linear threshold functions for subgroups, it leads to less biased, but also notably less accurate models. FPSF is measured on subgroups (using the rule representation), violating FPSF the most, as found by MIO. Reported accuracy and F$_1$ scores are measured on the test set.
    }
    \label{fig:train}
\end{figure*}

In training, we focus on the FPSF measure in training linear models, as in GerryFair \citep{kearns_preventing_2018}. 
 We compare a classifier trained with MIO without any fairness constraints to 4 classifiers with fairness constraints: 2 trained with FPSF constraints using logical conjunctions as subgroups and 2 with linear subgroups, to better compare with GerryFair. All are trained using the FPSF constraints, but two (one with conjunction subgroups and one with linear) use SD (on samples from the negative class) to search for the violating subgroup. We solve the formulations with a 4-hour time limit (with 5 minutes to find a violating subgroup) and limit the number of GerryFair iterations to 2880 (5 seconds per iteration, to match the 4-hour limit, though they often took almost twice that). We take 10,000 random samples from each dataset for training and another 10,000 for testing. We limit the time to find a violating subgroup (the lazy constraint) to 5 minutes and set the fairness threshold $\gamma$ to 0.01 in all datasets, except for ACS Mobility, where we set it to 0.005, because even the baseline model satisfied 0.01-FPSF fairness.

% To showcase the utility of our proposed framework, we compare it to the learning of a linear classifier under the FPSF (using linear models as subgroups) constraint, used in the GerryFair library \citep{kearns_preventing_2018}. 

As expected, using the global optimizer and the less restrictive (but more natural) conjunction subgroups ($\mathcal{S}_A$) instead of linear models subgroups ($\mathcal{S}_L$) improves classifier performance, as evidenced by the accuracy results in Figure \ref{fig:train}. Our framework compares favorably to the model that disregards fairness, suffering only a modest drop in accuracy (even negligible in some cases). Further, given the MIO's strict adherence to constraints, the fairness threshold on training data is kept exactly, see the second row of Figure \ref{fig:train}. On out-of-sample data (third row), the FPSF violation of our framework is slightly above the threshold.
% , suggesting that the 
% model might be ``overfitting'' to fairness on the training data. 
% training set was not representative enough. 
Nevertheless, the violation is mostly below 1.5x the threshold and notably lower than that of the base model.

% This inadequacy could be alleviated by utilizing more data in the training process to get a better estimate of the unfairness.
Furthermore, when considering the wider definition of subgroups ($\mathcal{S}_L$), we see that $\gamma$-FPSF fairness is usually satisfied even on the test data. It often achieves performance comparable to or better than GerryFair while constraining the unfairness more reliably. Specifically, notice the ACS Public Coverage results, where GerryFair underperforms in Accuracy and also fails to adhere to the fairness constraint (using a narrower subgroup definition) even on the training data.
% The GerryFair seems to generalize better on the fairness criterion, which is understandable, given the wider notion of a subgroup used by GerryFair. However, it pays greatly with a performance decrease by overconstraining the training. For example, on ACS Employment, GerryFair loses 6.2 percentage points ($\approx$10\%) in median accuracy compared to our framework, and on ACSPublicCoverage, it collapses completely to 39\% median accuracy, while also failing to adhere to the fairness constraint.

Overall, the performance of FPSF or SD in finding the violating subgroup seems equivalent for conjunction subgroups. This is not so much the case for the linear subgroups, where SD performs slightly worse on ACS Income and ACS Travel Time. This suggests a clearer dominance of the FPSF objective, compared to the results in bias detection in Figure \ref{fig:eval} (recall that FPSF is equivalent to SPSF on negative samples, as per Eq. \ref{eq:fpsf_spsf}). While those suggest that SPSF and SD each have their use in evaluation, using SPSF (FPSF) works better in training more generally.

\paragraph{Lazy cuts}
Importantly, the lazy approach to fairness constraints makes the MIO approach feasible. It achieves $\gamma$-FPSF fairness while constraining a few dozen subgroups (at most 47), instead of the millions that exist (or even more for the $\mathcal{S}_L$ class). See Table \ref{tab:data} for the mean counts of cuts, i.e., subgroups used in constraints. When working with linear subgroups, more cuts are generally needed, which is understandable, given that the class of subgroups is bigger. There is one exception to this: the ACS Employment dataset, where the nature of the problem makes linear cuts more effective at mitigating bias, so that fewer cuts cover more bias.

\begin{figure}
    \centering
    \includegraphics[width=0.87\linewidth]{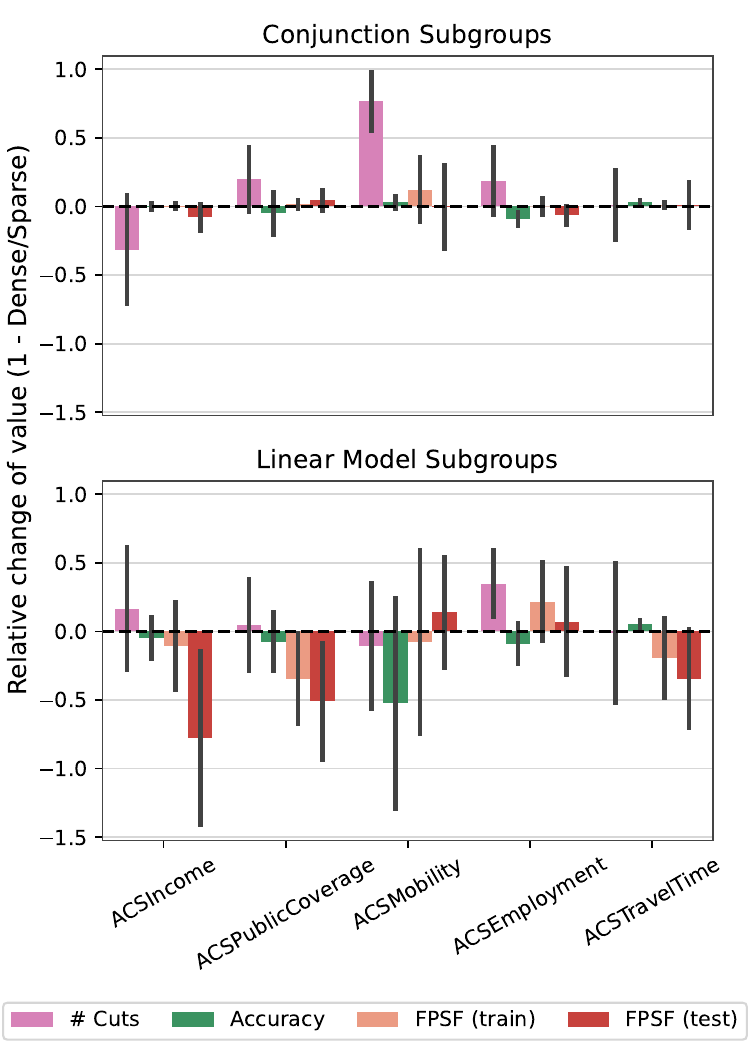}
    \caption{Change in fairness, accuracy, and number of cuts, after introducing a sparsity term into the objective. A positive value means an increase in value after introducing the sparsity term.}
    \label{fig:sparsity}
\end{figure}

\paragraph{Sparsity effects}
Since we aim to train interpretable models, we further investigate the effect of introducing a sparsity term to the optimization. We compute sparsity as the proportion of non-zero coefficients of the linear model and add it as a constraint to the objective, with a multiplicative factor of 0.1. We find that this zeroes 54\% (or 76\% when considering linear subgroups) of coefficients on average, compared to almost 0\% when disregarding sparsity. Importantly, in the sparse setting, we mostly find classifiers with comparable accuracy to the dense setting.
% (except for Mobility using linear subgroups, where the sparse model struggles). 

See Figure \ref{fig:sparsity}, where we aggregate runs using SD and FPSF together and show the relative change in accuracy, fairness, and number of cuts after introducing the sparsity term. Interestingly, in linear subgroups, introducing sparsity often leads to a notable decrease in unfairness on the test set, while the number of cuts (subgroups) generated does not change drastically. The increase in ACS Mobility can be explained by the low base number of cuts (0.8 on average) in the dense setting.
% Note that the results have a high standard deviation. 

\section*{Conclusion}
We have shown that MSD and SPSF yield the same set of subgroups when detecting the most unfair subgroup. Since MSD amounts to optimizing balanced 0-1 error, this might open the possibility of using more general training algorithms even for SPSF. We have validated the usefulness of Mixed-Integer Optimization for identifying subgroups with higher unfairness, even the guaranteed most unfair, an essential task in bias auditing and mitigation.
% does this help in the other direction? can we approx MSD with SPSF in training - approx yes, not anything else... i think

We have demonstrated that the lazy approach to fairness constraint generation 
% (also known as branch-and-cut or cutting-plane generation) 
enables consideration of exponentially many fairness constraints (in the number of protected features) within the framework of intersectionally fair training. 

% While we have focused on interpretable classifiers, the approach can be viable for training deep neural networks under constraints \citep{kliachkin2025humancompatible}. 

% We imagine that constrained machine learning \citep{ramirez2025position} will have an important role in AI Safety more broadly.   
% They also mention that for some distributions, this is not polynomially learnable in Theorem 3.3 - makes more sense to use MIP maybe has anyone done that yet?

% max 7+2 pages
% guide: https://2026.ijcai.org/ijcai-ecai-2026-call-for-papers-main-track/

\appendix
\section*{Ethical Statement}
While our framework optimizes for the broader intersectional fairness, we acknowledge that this is merely a technical proxy for true societal justice. Further, our fairness guarantee applies only to training, while true bias mitigation requires continuous oversight. 

We also acknowledge the potential dual use of bias detection, with malicious users using it to target disadvantaged subgroups. The use of MIO also incurs a higher computational cost, which we believe is justified by the improved quality of the results.

% When mitigating bias, one should carefully investigate the entire ML pipeline.
% Further, despite our consideration of intersectional fairness, there remain many difficulties in achieving true bias mitigation \citep{alvarez_policy_2024}.
% Despite the use of intersectional fairness, it might happen that some correlated protected attributes are not observed, e.g., caregiving influencing the gender pay gap. In such cases equalizing might lead to disadvantaging male caregivers. This is referred to as the Debiasing paradox \citep{alvarez_policy_2024}.
% (here cite the original source, if used).

Large Language Models were used to improve the text flow, for grammar checks, and in visualisations. All uses were thoroughly checked by the authors.  

% \section*{Acknowledgments}

% The preparation of these instructions and the \LaTeX{} and Bib\TeX{}
% files that implement them was supported by Schlumberger Palo Alto
% Research, AT\&T Bell Laboratories, and Morgan Kaufmann Publishers.
% Preparation of the Microsoft Word file was supported by IJCAI.  An
% early version of this document was created by Shirley Jowell and Peter
% F. Patel-Schneider.  It was subsequently modified by Jennifer
% Ballentine, Thomas Dean, Bernhard Nebel, Daniel Pagenstecher,
% Kurt Steinkraus, Toby Walsh, Carles Sierra, Marc Pujol-Gonzalez,
% Francisco Cruz-Mencia and Edith Elkind.

%% The file named.bst is a bibliography style file for BibTeX 0.99c
\bibliographystyle{named}
\bibliography{references}

\newpage
% PAGE 10+ 
\section{MIO formulations}
\label{app:MIO}
We present the MIO formulations used in the framework in their entirety. We start with the formulations for bias detection, and then show the training formulations. 
% Their implementation will be made open-source upon acceptance.

\subsection{Detection formulation}
The formulation for bias detection using conjunction groups, modified from MSD \citep{nemecek_bias_2025}, reads as follows:
\begin{subequations}
\label{eq:conjunct}
\begin{align}
    \max \; & \mathrm{Unfairness}(\hat{y}) \\
    \mathrm{s.t.} \; & \hat{y_i} \le 1 - (u_j - x_{i,j} \cdot u_j) \hspace{1cm} i \in D, \; j \in A \label{eq:mio_positive} \\
    & \hat{y_i} \ge 1 - \sum_{j \in A}(u_j - x_{i,j} u_j) \hspace{1.8cm} i \in D \label{eq:mio_negative} \\
    & \sum_{i \in D} \hat{y_i} \ge N_{\mathrm{min}} &  
    \label{eq:mio_lowerbound} \\
    & 0 \le \hat{y_i} \le 1 \hspace{4.1cm}  i \in D \label{eq:mio_yhatbounds}\\
    & u_j \in \{0, 1\} \hspace{4cm} j \in A,
\end{align}
\end{subequations}
where we assume $x_{i,j}$ are binary parameters, each corresponding to $j$-th feature of $i$-th sample from a dataset $D$, binary $u_j$ is 1 when feature $j$ is present in the conjunction and $\hat{y}_i$ is equal 1 if the $i$-th sample is within the subgroup. An optional parameter is $N_\mathrm{min}$ represents the size of a minimal subgroup to be considered. We keep this value 0, effectively omitting the constraint \eqref{eq:mio_lowerbound}. Because all $x$ are binary, the variables $\hat{y}_i$ can be real-valued, notably speeding up the solution process.

\paragraph{SD objective}
As the objective function $\mathrm{Unfairness}(\hat{y})$, we would like to put SPSF, FPSF, or (M)SD. As shown in the paper, they are all related by factors constant with respect to this formulation. However, the scale of the coefficients can affect the solution process, which is why we test each of them. The MSD objective is:
\begin{subequations}
\label{eq:MSD_opt}
    \begin{align}   
    \max \; & o \\
    & o \le \frac{1}{|D_+|} \sum_{i \in D_+}{\hat{y_i}} - \frac{1}{|D_-|} \sum_{i \in D_-}{\hat{y_i}} + 2b \label{eq:mio_abs1} \\
    & o \le \frac{1}{|D_-|} \sum_{i \in D_-}{\hat{y_i}} - \frac{1}{|D_+|} \sum_{i \in D_+}{\hat{y_i}} + 2(1-b) \label{eq:mio_abs2} \\
    &b \in \{0, 1\},
    \end{align}
\end{subequations}
where $D_+$ and $D_-$ correspond to positively- and negatively-classified subsets of the data, respectively. constraints \eqref{eq:mio_abs1} and \eqref{eq:mio_abs2} bound the value o from above so that $o$ corresponds to the absolute value of the difference, as per the (M)SD definition.

\paragraph{SPSF objective}
For SPSF, this would look like:
\begin{subequations}
\label{eq:spsf_opt}
    \begin{align}   
    \max \; & o \\
    & o \le \frac{|D_-|}{|D|^2} \sum_{i \in D_+}{\hat{y_i}} - \frac{|D_+|}{|D|^2} \sum_{i \in D_-}{\hat{y_i}} + 2b \\
    & o \le \frac{|D_+|}{|D|^2} \sum_{i \in D_-}{\hat{y_i}} - \frac{|D_-|}{|D|^2} \sum_{i \in D_+}{\hat{y_i}} + 2(1-b) \\
    &b \in \{0, 1\},
    \end{align}
\end{subequations}
where the difference is only in the multiplicative coefficients. The factor 2 is no longer a tight big-M bound, but a tight one may be computed as $$\frac{2 \cdot |D_+| \cdot |D_+|}{|D|^2},$$
which is guaranteed to be less than 2, making 2 a safe option for a big-M constant.

\paragraph{FPSF objective}
We do not directly optimize the FPSF as an objective, since the scale of the coefficients does not differ greatly from that in SPSF. Nevertheless, its formulation could look like:
\begin{subequations}
\label{eq:fpsf_opt}
    \begin{align}   
    \max \; & o \\
    & o \le \frac{|D_{0,-}|}{|D_0| \cdot |D|} \sum_{i \in D_{0,+}}{\hat{y_i}} - \frac{|D_{0,+}|}{|D_0| \cdot |D|} \sum_{i \in D_{0,-}}{\hat{y_i}} + 2b  \\
    & o \le \frac{|D_{0,+}|}{|D_0| \cdot |D|} \sum_{i \in D_{0,-}}{\hat{y_i}} - \frac{|D_{0,-}|}{|D_0| \cdot |D|} \sum_{i \in D_{0,+}}{\hat{y_i}} + 2(1-b) \\
    &b \in \{0, 1\},
    \end{align}
\end{subequations}
where the subscript $0$ means that we consider samples for which the true class $y$ equals 0. A tighter big-M bound would be computed similarly to SPSF $$\frac{2 \cdot |D_{0,+}| \cdot |D_{0,-}|}{|D| \cdot |D_0|}.$$

The complete formulation thus consists of choosing an objective out of the formulations \eqref{eq:MSD_opt}, \eqref{eq:spsf_opt}, or \eqref{eq:fpsf_opt} and adding it to the main formulation for finding conjunctions \eqref{eq:conjunct}.

\paragraph{Linear subgroups}
Alternatively, we have also used a Linear representation of a subgroup. In that case, the main body of the formulation could look like: 
\begin{subequations}
\label{eq:linear}
\begin{align}
    \max \; & \mathrm{Unfairness}(\hat{y}) \\
    \mathrm{s.t.} \; & c^\top x_i - t \ge (\hat{y_i} - 1) \cdot 2|A| \hspace{2.2cm} i \in D \label{eq:lin0} \\
    & c^\top x_i - t \le \hat{y_i} \cdot 2|A| - \epsilon \hspace{2.5cm} i \in D \label{eq:lin1} \\
    & \sum_{i \in D} \hat{y_i} \ge N_{\mathrm{min}} &  \\
    & -1 \le c_j \le 1 \hspace{3.6cm}  j \in A \\
    & -|A| \le t \le |A|  \\
    & \hat{y}_i \in \{0, 1\} \hspace{4cm} i \in D,
\end{align}
\end{subequations}
where $c$ is a vector of coefficients, $t$ is the decision threshold, $x$ are assumed normalized to $[0,1]$ range and $\epsilon$ is a small constant ($10^{-6}$ in our experiments) enforcing strictness of the constraint \eqref{eq:lin1}. The rest is the same as in \eqref{eq:conjunct}.

This formulation differs from the conjunction-based formulation in a couple of ways. Firstly, the number of discrete (binary) variables is $O(n)$ where $n$ is the number of samples, while in the conjunction formulation, this was $O(|A|)$, i.e., linear in the number of (binarized) features. Due to binarization, the conjunction-based formulation can also more efficiently reduce the overall number of samples by summing up the coefficients of the same vectors $x$. This can be done (and we do it) even for linear subgroups, but the gains are not as large when continuous features are involved. 

\paragraph{Fairness threshold}
Additionally, we have discussed adding a constraint on the detected fairness to improve solving time. If we utilize the above formulation for bias auditing and we know what $\gamma$ we wish to satisfy, we can add a simple constraint $o \ge \gamma$ to the formulation. Optionally, we can remove the objective and treat it as a feasibility problem. When we are using (M)SD in place of FPSF, we consider only the part of the data where $y=0$ and multiply the gamma in this constraint accordingly, leading to 
$$o \ge \frac{\gamma}{P(y = 0)P(h(x) = 1)P(h(x)= 0)}.$$

\subsection{Training formulation}
To describe the training formulation, we restate the formulation skeleton from \eqref{eq:train_form}:
\begin{subequations}
\begin{align}
    \min \; & \sum_{i=1}^n \llbracket \hat{y}_i \ne y_i \rrbracket/N_{y_i} \\ 
    \mathrm{s. t.} \; & h(x_i) = \hat{y}_i & \forall i \in \{1, \ldots, n\} \label{eq:h} \\
    & \mathrm{Unfairness}_S(x, y, \hat{y}) \le \gamma & \forall S \in \mathcal{S}. \label{eq:subg_fair} 
\end{align}
\end{subequations}
We will examine the model constraints \eqref{eq:h} and fairness constraints \eqref{eq:subg_fair} separately.

Starting with the predictor representation, we can choose from many types, as suggested in the main body. We implement two types: a linear classifier and a DNF (i.e., a rule set). 

\paragraph{Linear classifier} 
A linear classifier is practically the same as the linear model representation of the subgroup, except for the objective, which also includes the optional sparsity term
\begin{subequations}
\begin{align}
    \min \; & \frac{1}{|D_0|}\sum_{i \in D_0}\hat{y}_i + \frac{1}{|D_1|}\sum_{i \in D_1}(1 - \hat{y}_i) + \frac{\sigma}{d} \sum_{i=1}^j s_j \\
    \mathrm{s.t.} \; & c^\top x_i - t \ge (\hat{y_i} - 1) \cdot 2d \hspace{2.0cm} i \in D \\
    & c^\top x_i - t \le \hat{y_i} \cdot 2d - \epsilon \hspace{2.3cm} i \in D \\
    & s_j \ge c_j  \hspace{3.3cm} j \in \{1,\ldots,d\} \label{eq:sparsity1}\\
    & s_j \ge -c_j \hspace{3.05cm} j \in \{1,\ldots,d\}  \label{eq:sparsity2} \\
    & -1 \le c_j \le 1 \hspace{2.4cm} j \in \{1,\ldots,d\} \\
    & -d \le t \le d  \\
    & \hat{y}_i \in \{0, 1\} \hspace{4cm} i \in D \\
    & s_j \in \{0, 1\} \hspace{2.7cm} j \in \{1,\ldots,d\},
\end{align}
\end{subequations}
and the neccessary constraints \eqref{eq:sparsity1} and \eqref{eq:sparsity2}, such that $s_j$ is equal to 1 when the coefficient $c_j$ is not 0. In experiments with a sparse linear model, we use the sparsity weight parameter $\sigma = 0.1$. $D_0$ and $D_1$ are sets of samples with $y=0$ and $y=1$, respectively. 
Also note that here, $d$ is the total number of features, and $x_i$ is the full sample $i$, rather than just its protected features.

\paragraph{DNF formulation}
Based on the work of \citet{su_learning_2016} (Eq. 10), we use the following formulation to represent a DNF with $C$ clauses via a representation of a CNF:
\begin{subequations}
\begin{align}
    \min \; & \frac{1}{|D_0|}\sum_{i \in D_0}e_i + \frac{1}{|D_1|}\sum_{i \in D_1}e_i \\
    \mathrm{s.t.} \; & e_i \ge 1 - \sum_{j=1}^d \bar{x}_{i,j} u_{j,k} \hspace{0.5cm} i \in D_0, k \in \{1,\ldots,C\} \\
    & e_i \ge \left( \sum_{k=1}^C c_{i,k} \right) - (C - 1) \hspace{1.5cm} i \in D_1 \\
    & c_{i,k} \ge \bar{x}_{i,j} u_{j,k} \hspace{0.5cm} i \in D_1, j \in \{1,\ldots,d\}, k \in \{1,\ldots,C\} \\
    & u_{j,k} \in \{0, 1\} \hspace{0.5cm} j \in \{1,\ldots,d\}, k \in \{1,\ldots,C\},
\end{align}
\end{subequations}
where $e_i$ is a variable representing the 0-1 error of $i$-th sample, $u_{j,k}$ is 1 if $j$-th feature is used in $k$-th clause, and $c_{i,k}$ is 1 if $i$-th sample violates the $k$-th clause. The bar above $\bar{x}_{i,j}$ means that we consider the negation of the input, in order for this CNF formulation (with flipped $y$ as well) to correspond to a DNF, using De Morgan's laws. 
We can easily recover $\hat{y}_i$ by setting $\hat{y}_i = e_i$ for negative samples and $\hat{y}_i = 1 - e_i$ for positive ones.

\paragraph{The unfairness constraint}
For a given subgroup $S$, represented here as a set of sample indices, we now define the unfairness constraint. These constraints will be added for each cut. Firstly, we determine the ``direction'' of the violation. This means taking the sign of the given measure before taking the absolute value. This will allow us to add a single constraint, rather than adding two constraints, one for each possible relative value.

We can formulate the SPSF straightforwardly:
\begin{equation}
\frac{|S|}{|D|^2}\sum_{i \in D} \hat{y} - \frac{1}{|D|}\sum_{i \in S} \hat{y} \le \gamma.
\end{equation}
In case the relative violation before taking the absolute value was positive. Otherwise, we would negate the left-hand side.

Similarly for FPSF:
\begin{equation}
\frac{|S_0|}{|D|\cdot |D_0|}\sum_{i \in D_0} \hat{y} - \frac{1}{|D|}\sum_{i \in S_0} \hat{y} \le \gamma,
\end{equation}
where $S_0$ is the set of subgroup indices for which the true $y=0$, similar to the $D_0$.

\paragraph{SD constraint}
The most complicated to put into a constraint is the SD measure. While SPSF and FPSF use only the $\hat{y}_i$ value, SD requires additional variables due to its conditioning on the classification outcome. 
We formulate it in the following way:
\begin{equation}
    \sum_{i \in S} p_i^+ - \sum_{i \in S} p_i^- \le \gamma,
\end{equation}
where $p_i^+$ and $p_i^-$ are variables equal to the inverse of the number of positively- and negatively-classified samples, respectively. We obtain those values by adding the following constraints to the underlying problem \eqref{eq:train_form}:
\begin{subequations}
\begin{align}
    & p_i^+ \le \hat{y_i} & i \in D \\
    & p_i^+ \le p_\text{ref}^+ & i \in D \\
    & p_i^+ \ge p_\text{ref}^+ - (1 - \hat{y}_i)& i \in D \\
    & \sum_{i \in D} p_i^+ = 1 \\
    & p_i^- \le 1 - \hat{y_i} & i \in D \\
    & p_i^- \le p_\text{ref}^- & i \in D \\
    & p_i^- \ge p_\text{ref}^- - \hat{y}_i & i \in D \\
    & \sum_{i \in D} p_i^- = 1,
\end{align}
\end{subequations}
where we use the fact that the sum of the inverses over all positively (resp. negatively) classified samples must sum to 1 and all elements must be equal. This is achieved by making them equal to the reference variable $p_\text{ref}^+$ (resp. $p_\text{ref}^-$).

These constraints need to be added only once, since the values can be shared by multiple subgroups. Still, in our experiments, this formulation was usually not solved to a useful quality within the given time. 

\section{More results}
We expand the discussion, adding more results and insight into the framework. We discuss the scalability in Section \ref{app:time}, more results of the dense models in Section \ref{app:more}, including a deeper evaluation of the sparse models, and finally show results using DNFs as another predictor class in Section \ref{app:dnfs}. 

% As noted in the previous section, using the SD constraint in the training formulation made it quite difficult to solve.

\paragraph{Conjunction interpretability}
To exemplify the usefulness of considering conjunctions in interpreting bias, we present the following comparison of identified subgroups in the ACS Travel Time dataset.

The SD maximizing conjunction reads:
\begin{equation*}
    \mathtt{sex = female} \land \mathtt{race = white}
\end{equation*}
while the linear model can be concisely represented as
% $x^\top [0.39, 0.49, -0.28, 0.18, -1, -0.23, -0.23, -0.22, -0.39, -0.56, -0.24, -0.05, -0.55, -1, -1, -0.8, -0.93, -1] \ge -1.12$
\begin{equation*}
\begin{aligned}
    0.39x_{1} & + 0.49x_{2} - 0.28x_{3} + 0.18x_{4} - 1.00x_{5} \\
    & - 0.23x_{6} - 0.23x_{7} - 0.22x_{8} - 0.39x_{9} \\
    & - 0.56x_{10} - 0.24x_{11} - 0.05x_{12} - 0.55x_{13} \\
    & - 1.00x_{14} - 1.00x_{15} - 0.80x_{16} - 0.93x_{17} \\
    & - 1.00x_{18} \ge -1.12
\end{aligned}
\end{equation*}
where all parameters were rounded to 2 decimal places, possibly losing nuance in some combination of features. It certainly is not immediately clear which samples belong to the group, except that if a subject has any single feature equal to 1, they belong to the class. This is, however, unlikely, since some of the features are one-hot encoded, meaning at least a few must always be 1. 

\subsection{Scalability analysis}
\label{app:time} 
% explain that in rules the duplicate elements can be combined and their weight summed up
% - already mentioned in A.1
% also give time results for the training?
To assess scalability, we use the same detection setup with a tighter deadline of 5 minutes; see Figure \ref{fig:5min}. Within that time, even bias detection using conjunctions sometimes fails to find an optimal solution. It is, however, enough to consider 10 minutes to obtain the same results (for the conjunction subgroups) as in the main Figure \ref{fig:eval}. For results after 10 minutes, see Figure \ref{fig:10min}.

In both cases, there are datasets where MIO finds a more violating subgroup than GerryFair. In the case of ACS Income, our approach is even faster than GerryFair when considering conjunction subgroups, which is probably due to the low number of protected attributes. 

\begin{figure*}
    \centering
    \includegraphics[width=\linewidth]{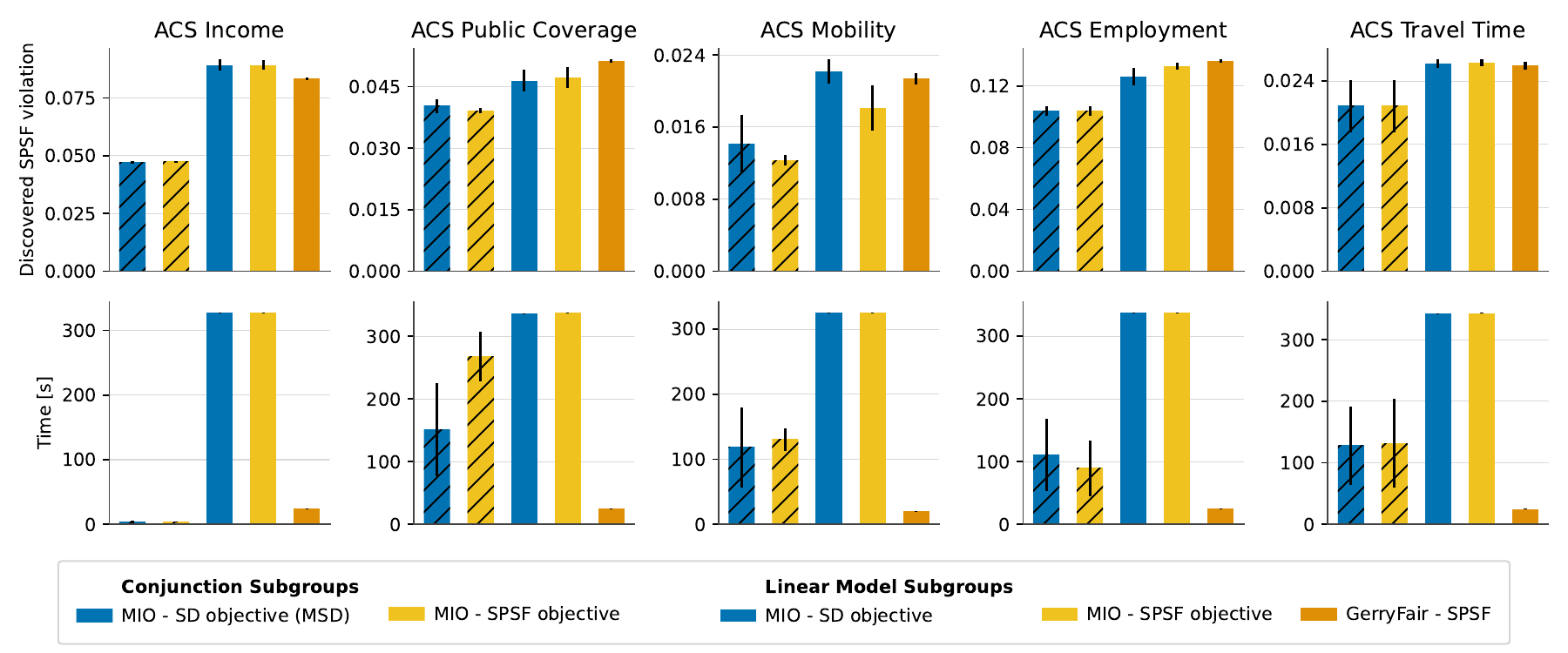}
    \caption{Results of bias detection with a time limit of 5 minutes.}
    \label{fig:5min}
\end{figure*}

\begin{figure*}
    \centering
    \includegraphics[width=\linewidth]{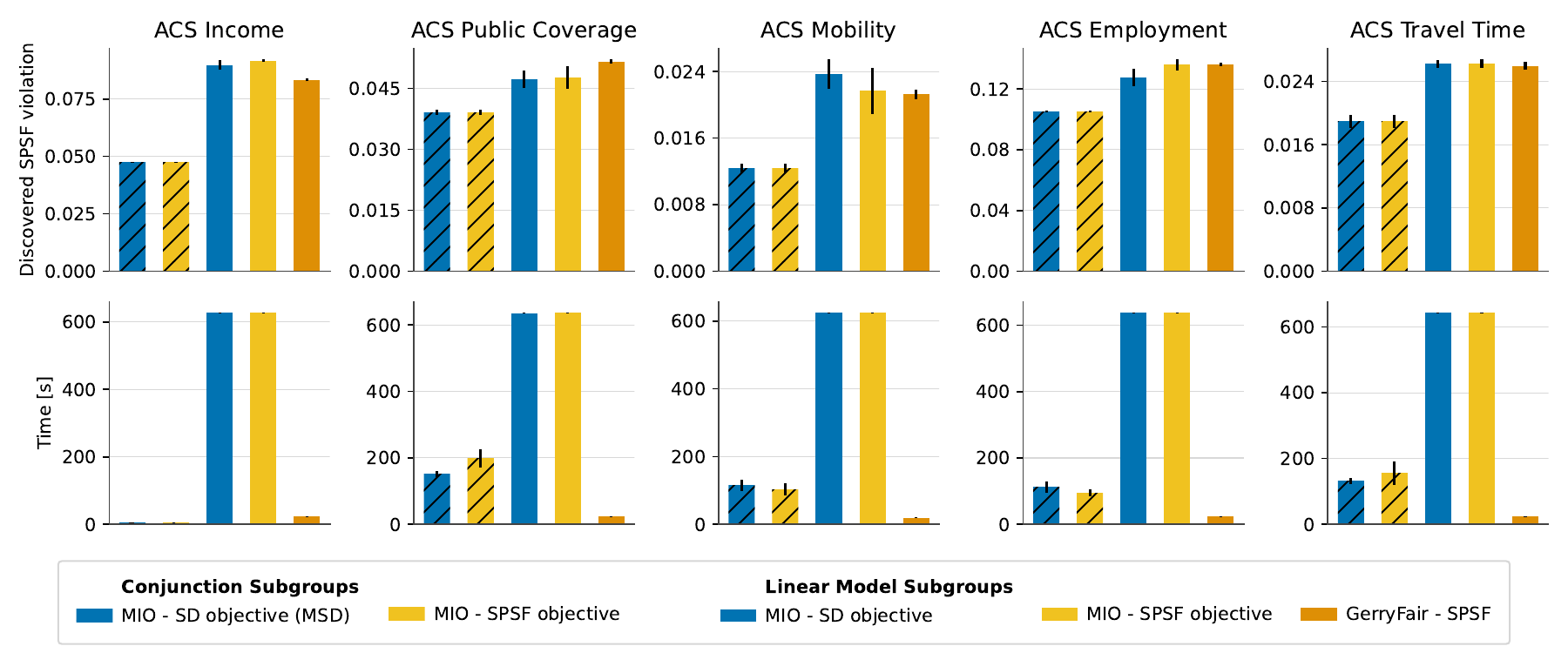}
    \caption{Results of bias detection with a time limit of 10 minutes.}
    \label{fig:10min}
\end{figure*}

\subsection{More results}
\label{app:more}
To allow for a more thorough comparison, we provide more detailed results in Figure \ref{fig:dense_big}. We include the F$_1$ score, the number of cuts, and the time taken. 
Note that GerryFair takes considerably more time during optimization, while returning less reliable results.
We also include the SPSF and MSD measures. The SPSF generally works well, often satisfying $\gamma$-FPSF even with conjunction subgroups. On the other hand, the MSD struggles to meet the fairness threshold, set to $4\gamma$ for the MSD, to be comparable to the SPSF measure. This is mainly caused by the formulation being notably more difficult to solve with the additional constraints introduced. This is evidenced by the low number of cuts generated. The optimization struggles to provide a feasible solution to even start the cut generation process.

\begin{figure*}
    \centering
    \includegraphics[width=\linewidth]{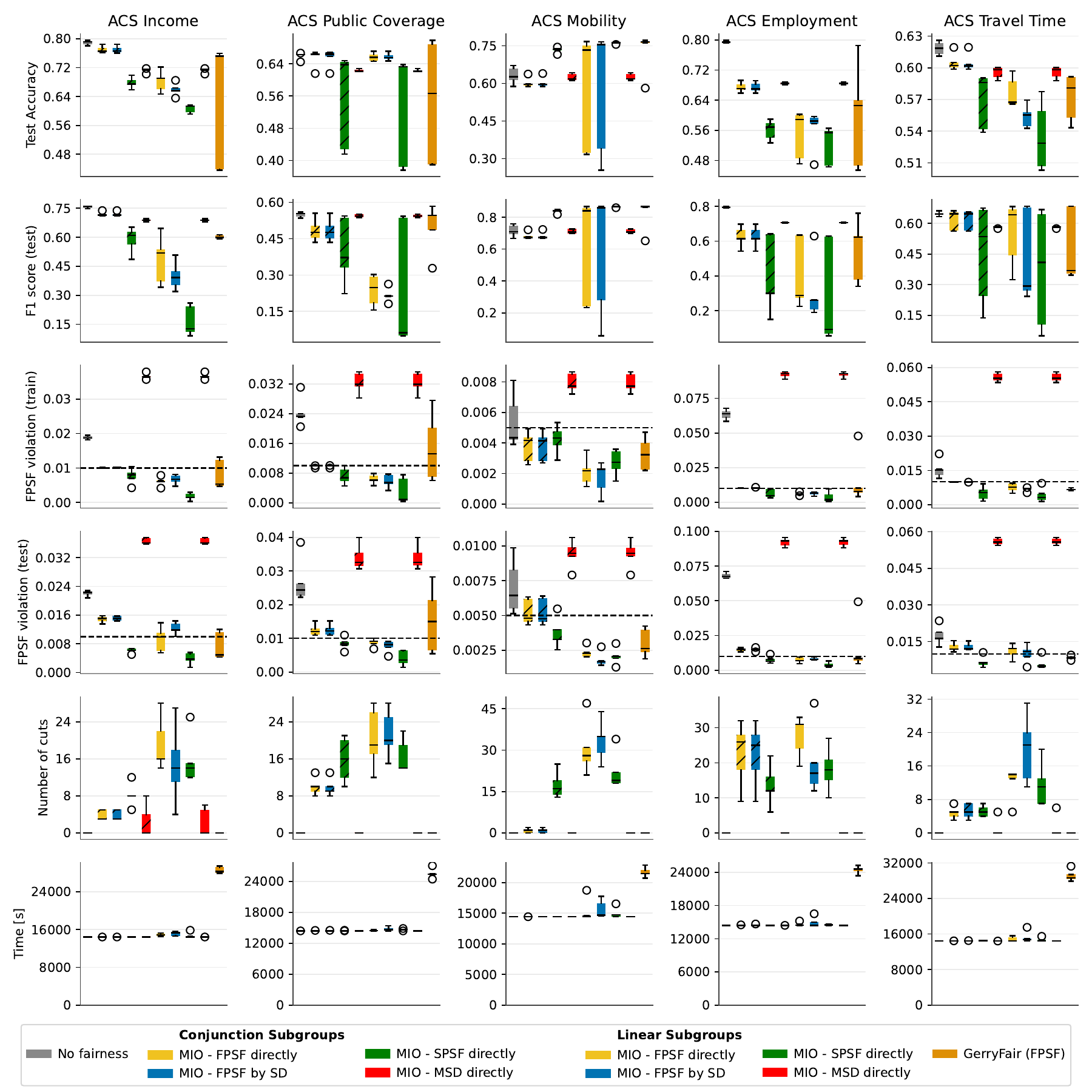}
    \caption{Detailed results of training a linear classifier without a sparsity term ($\sigma = 0$) under the proposed framework.}
    \label{fig:dense_big}
\end{figure*}

\subsubsection{Results of sparse models}
\label{app:sparse} 
We report the same set of evaluation dimensions also for the sparse linear models in Figure \ref{fig:sparse_big}. Note especially the decrease in unfairness and its relation to the number of cuts. For example, on ACS Income, more cuts were generated, leading to $\gamma$-FPSF satisfaction even on the training set for linear subgroups. 

\begin{figure*}
    \centering
    \includegraphics[width=\linewidth]{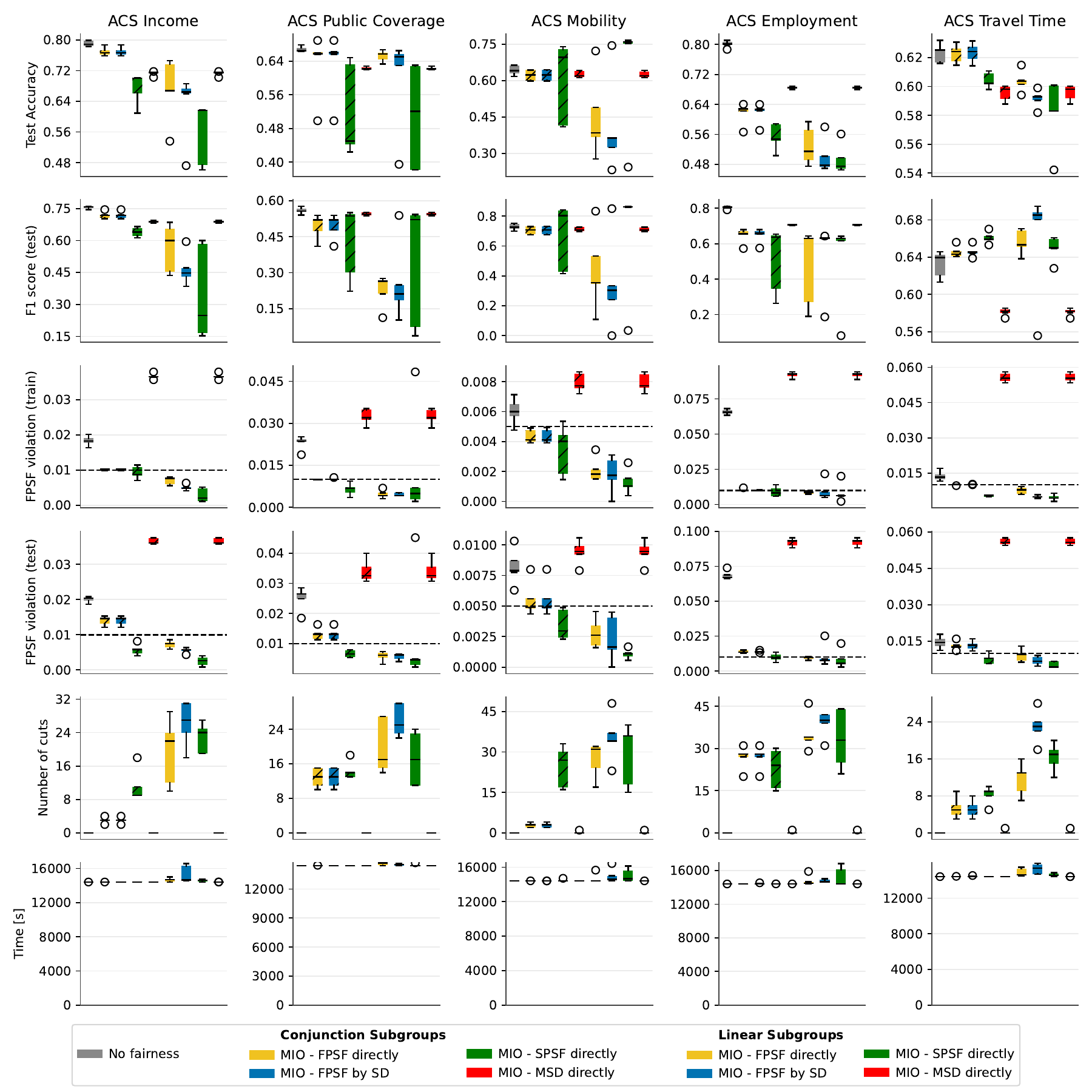}
    \caption{Detailed results of training a linear classifier with a sparsity term $\sigma = 0.1$ under the proposed framework}
    \label{fig:sparse_big}
\end{figure*}
% \subsection{Solving process}
% \label{app:solve_process}
% Show the MIP gaps

\subsection{DNFs}
\label{app:dnfs} 
\begin{figure*}
    \centering
    \includegraphics[width=\linewidth]{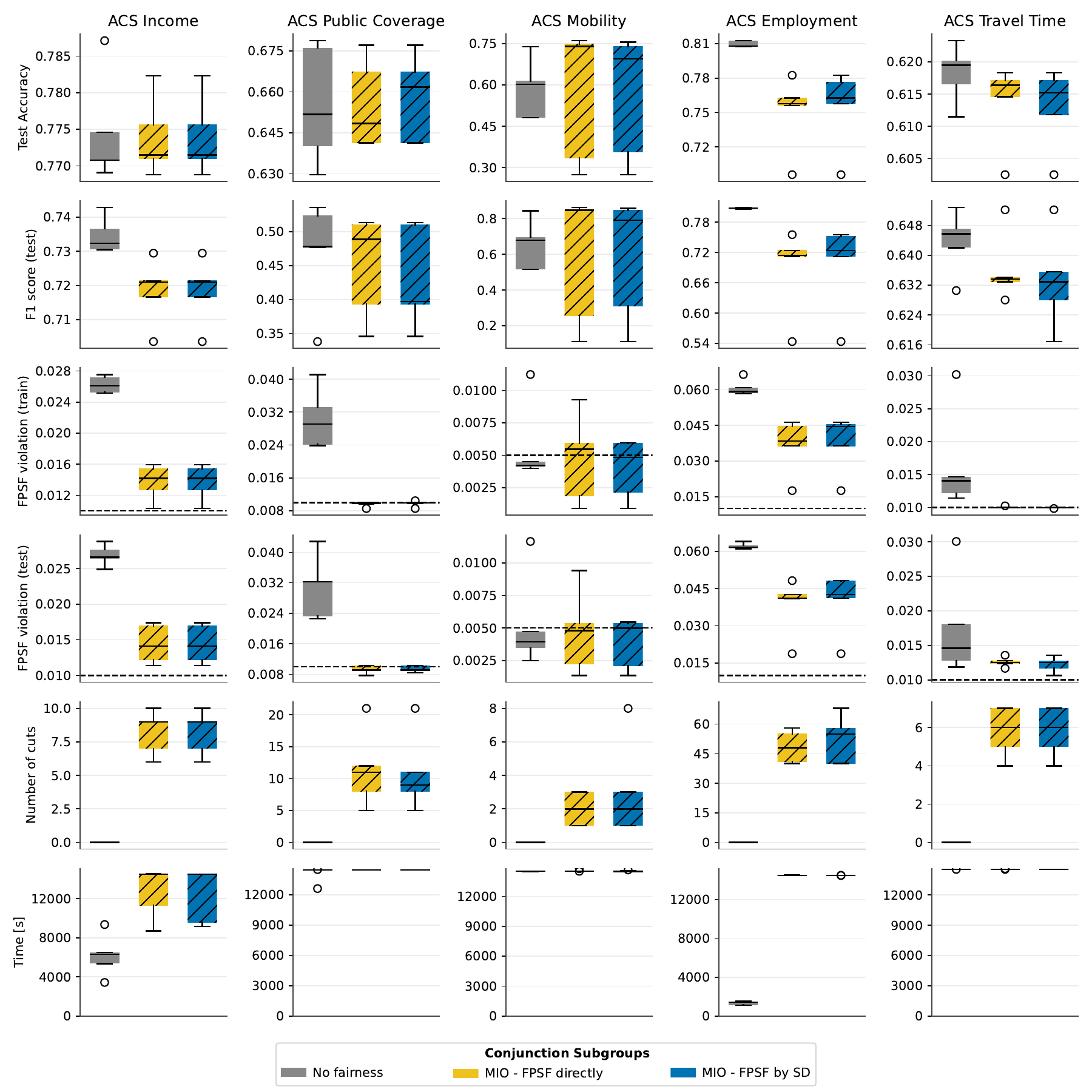}
    \caption{Results of training a DNF classifier with 1 clause (a conjunction) under the proposed framework.}
    \label{fig:dnf1}
\end{figure*}

\begin{figure*}
    \centering
    \includegraphics[width=\linewidth]{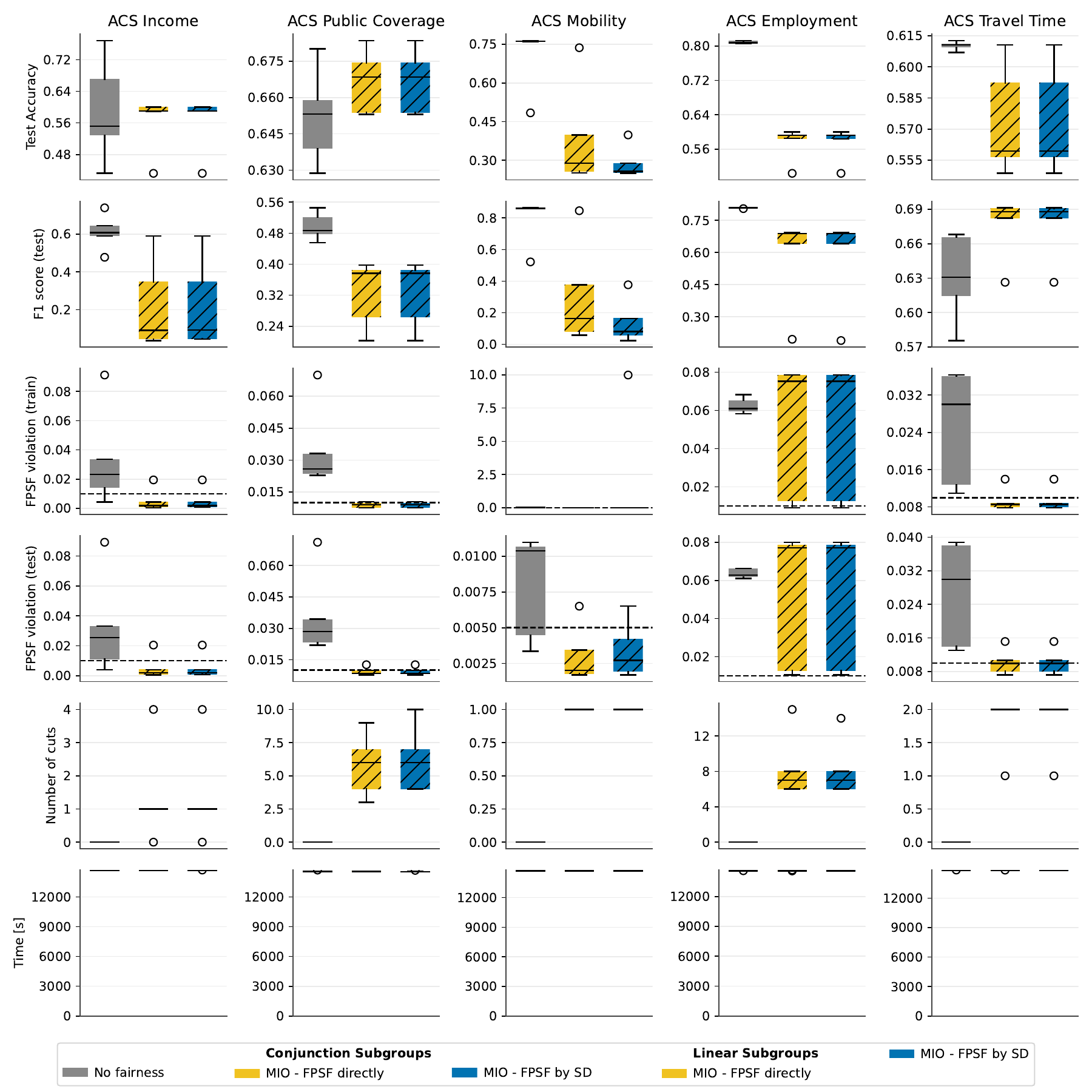}
    \caption{Results of training a DNF classifier with 3 clauses under the proposed framework.}
    \label{fig:dnf3}
\end{figure*}

Using the same setup, we have also trained a DNF model, limited to 1 (being equivalent to a conjunction) or 3 clauses. The results are in Figure \ref{fig:dnf1} and \ref{fig:dnf3}, respectively. They show two main lessons: one is that more time should be devoted to the auditing step, as exemplified by the unfairness above the threshold in the training data. This can be explained only by the auditor's formulation failing to identify the violating subgroup within 5 minutes, thus accepting an incorrect solution.

The second lesson is that notably more time is needed to train the DNF models with a higher number of clauses using this formulation. This is suggested by the performance results together with the low number of cuts generated (suggesting that few feasible solutions were obtained throughout the optimization). Here, choosing a column-generation approach, like \citep{dash_boolean_2018}, might improve the scalability of training the predictor itself.

\end{document}